\pgfplotsset{compat=1.5}
\newcommand{\R}{\ensuremath{\mathbb{R}} }
\newcommand{\n}{\tt{n}}
\renewcommand{\i}{\tt{i}}
\newcommand{\bs}[1]{\boldsymbol{#1}}
\newtheorem{theo}{Theorem}[section]
\newtheorem{problem}[theo]{\bfseries Problem}
\begin{document}

\title{Graphical estimation of multivariate count time series}

\author{\name Vurukonda Sathish \email sathish@ee.iitb.ac.in \\
       \addr Department of Electrical Engineering\\
       Indian Institute of Technology Bombay, \\Powai, 400076, India
       \AND
       \name Debraj Chakraborty \email dc@ee.iitb.ac.in \\
      \addr Department of Electrical Engineering\\
      Indian Institute of Technology Bombay, \\Powai, 400076, India
   		\AND
   		\name Siuli Mukhopadhyay \email siuli@math.iitb.ac.in \\
   		\addr Department of Mathematics\\
   		Indian Institute of Technology Bombay, \\Powai, 400076, India}
%\thanks{This work was partially supported by the Science and Research Engineering Board (Department of Science and Technology, Government of India). }
\editor{--}

%\author{\name Marina Meil\u{a} \email mmp@stat.washington.edu \\
%       \addr Department of Statistics\\
%       University of Washington\\
%       Seattle, WA 98195-4322, USA
%       \AND
%       \name Michael I.\ Jordan \email jordan@cs.berkeley.edu \\
%       \addr Division of Computer Science and Department of Statistics\\
%       University of California\\
%       Berkeley, CA 94720-1776, USA}
%\editor{Kevin Murphy and Bernhard Sch{\"o}lkopf}

\maketitle

\begin{abstract}%   <- trailing '%' for backward compatibility of .sty file
The problems of selecting partial correlation and causality graphs for count data are considered. A parameter driven generalized linear model is used to describe the observed multivariate time series of counts. Partial correlation and causality graphs corresponding to this model explain the dependencies between each time series of the multivariate count data. In order to estimate these graphs with tunable sparsity, an appropriate likelihood function maximization is regularized with an $\ell_{1}$-type constraint. A novel MCEM algorithm is proposed to iteratively solve this regularized MLE. Asymptotic convergence results are proved for the sequence generated by the proposed MCEM algorithm with $\ell_{1}$-type regularization. The algorithm is first successfully tested on simulated data. Thereafter, it is applied to observed weekly dengue disease counts from each ward of Greater Mumbai city. The interdependence of various wards in the proliferation of the disease is characterized by the edges of the inferred partial correlation graph. On the other hand, the relative roles of various wards as sources and sinks of dengue spread is quantified by the number and weights of the directed edges originating from and incident upon each ward. From these estimated graphs, it is observed that some special wards act as epicentres of dengue spread even though their disease counts are relatively low.
\end{abstract}

\begin{keywords}
 graphical models, count data, partial correlation, Monte Carlo expectation and maximization, $ \ell_{1} $-type regularization.
\end{keywords}

%%%%%%%%%%%%%%%%%%%%%%%%%%%%%%%%%%%%%%%%%%%%%%%%

\section{Introduction}

The dependencies between multiple interdependent time series data, such as the temperatures recorded from neighbouring geographical regions \citep{Appgeo}, stock prices from related markets \citep{Appstock}, biological signals that measure activities in the human brain \citep{Appbio} etc, are studied using graphical models. In a graphical model, each data series is represented as a vertex of a graph and the dependencies between the different time series are represented by edges in the graph. In many applications such as finance, insurance, biomedical, public health, etc, time series data is frequently measured in the form of counts \citep{jalalpour,freeland}. In this paper, we propose a novel algorithm for inferring the dependencies and the strength of the dependencies (i.e., the graphical model) for multiple inter-related {\it{count}} time series data. The algorithm is applied to discover the spreading pattern of dengue infections in a large Indian city. 

In graphical models, the dependencies between multiple time series are represented either by undirected or directed graphs (e.g., see \citealp{Granger,Eichler201222,DahlhausEichler,7091904} and references therein). First we review the undirected graphical models in brief and thereafter review the directed graphical models. Graphical models were introduced initially for multiple random variables in \citep{Dempster72} using the inverse covariance matrix of the associated random vectors. \citep{Brillinger96remarksconcerning} introduced such models for multivariate time series, where the dependencies between the nodes were represented using partial correlations. In \citep{Songsiri2009} and \citep{Graphical-Models3}, graphical description of vector autoregressive (AR) models were considered for stationary time series and defined in terms of the inverse spectral density matrix. These results were extended to graphical models with vector autoregressive moving average (ARMA) models in \citep{ARMAgraphical}. In \citep{Songsiri:2010}, a method to control the sparsity of the estimated graph was introduced through an $\ell_{1}$-type regularization. Such a regularization (often called Least Absolute Shrinkage and Selection Operator (LASSO)) on the standard maximum likelihood estimation (MLE) method, reduces overfitting and provides control over the sparsity of the estimated graphical model \citep{LASSO}. 
%The graphical models of AR stationary time series with sparsity introduced by $\ell_{1}$-type regularization were discussed in \cite{zorzi2016,zorzi2018,zorzi2019}. 

Directed graphical models were introduced in \citep{wright1921correlation,wright1934method} as path diagrams for the study of hereditary properties, linking parents and children graphically by arrows. For multivariate time series, one series is said to be causal for another series if the prediction of second series using all available information except the first series, can be improved by adding the available information about the first series. This definition of the so-called Granger causality was introduced in \citep{Granger}. Using this notion, the causalities between multiple time series were represented as directed Granger causality graphs, %\cite{granger1969investigating,sims1972money,pierce1977causality}. The definition of causality between different time series defined by \cite{granger1969investigating} is called as Granger causality. 
where each time series was represented as a vertex and the causality between two different time series was represented by a directed edge \citep{sims1972money,pierce1977causality}. 
%Later, Granger causality graphs were also discussed in \cite{sims1972money,pierce1977causality}.
%The definition of Granger causality and instantaneous causality have been formulated in terms of mean squared prediction error. 
The multivariate autoregressive (AR) model of second order stationary time series was used to obtain Granger causality graph \citep{Tjstheim,HSIAO1982} where the causalities between multiple time series were calculated directly from the autoregressive coefficients. The Granger causality graphs were studied by \citep{EICHLER2006,EICHLER200733422} through multivariate autoregressive (AR) models of second order stationary time series. Later, sparse Granger causality graphs were discussed in \citep{songsiri2013sparse,songsiri2015learning,zorzi2016,zorzi2018,zorzi2019} through multivariate AR time series.

In all the above papers, the graphical models were discussed for time series of Gaussian data. In contrast to the graphical models for Gaussian data, literature on graphical models for count data is still sparse. The undirected graphical models were introduced for multivariate count data in \citep{park2019high} using a Bayesian approach to inference. Later, the undirected graphical models were considered for multivariate count data in \citep{roy2020nonparametric} to obtain the complex interactions between genes using a pseudo-likelihood based algorithm. Unlike these directed models, Poisson directed acyclic graphical (DAG) models, also referred to as Bayesian Networks \citep{park2015learning,park2017learning}, were obtained for multivariate count data in \citep{hue2021structure}. In all these papers, the graphical models are obtained for multiple random variables. In contrast, our aim in this paper is to estimate undirected and directed graphical model for multivariate count time series where the dependencies between these time series are represented using causalities and partial correlations.

To model count time series, generalized linear models (GLM) were discussed in \citep{Zeger,Li1994,fahrmeir,fahrwagen,durbin2000}. In \citep{Bayesian:1}, parameter driven models were used for modeling multivariate time series of counts, using a latent process to account for the correlation between the observed counts. In this paper, we choose this model for our study. In this model, the latent process is unknown. While the unknown parameters in such models can, in principle, be estimated using the well known expectation and maximization (EM) algorithm \citep{emalgo}, associated difficulties in calculating the conditional density functions requires us to use the the Monte Carlo expectation and maximization (MCEM) algorithm \citep{doi:10.1080/01621459.1990.10474930,10.2307/2291149}. In this paper, we extend a combination of the methods from \citep{emalgo,doi:10.1080/01621459.1990.10474930,10.2307/2291149} to infer graphical models with tunable sparsity, using LASSO.

First we introduce certain choices about the model, formulation and methodology. 
%In our method, a GLM \cite{Bayesian:1} is used to describe the observed multivariate time series of counts. 
%The partial correlation between any two time series in a collection of multiple time series is defined in \cite{Time-Series}. 
Following \citep{Granger,Songsiri:2010}, the dependencies between the observed multivariate time series of counts are represented using partial correlations and causalities. However partial correlations and causalities between multiple time series can only be calculated easily for second order stationary time series. In our case, the observed count data is usually non stationary, e.g., the data might be seasonal in nature or follow various trends \citep{glm_book}. To address this issue, we introduce an unobserved multivariate second order stationary latent process to calculate the partial correlation and causality between the observed multiple count time series. We define the partial correlation and causality between elements of the observed time series in terms of the partial correlation and causality between elements of the stationary latent processes. As in \citep{10.2307/2291149}, the observed count data given the latent process, follows a Poisson distribution, while the multivariate latent process follows an autoregressive (AR) Gaussian model. 

% We choose the parameter driven model from \cite{Bayesian:1} to model the observed count data for different areas These type of models are widely applied for disease mapping studies. In this model, the observed count data given latent process follows Poisson distribution and latent process follows an Autoregressive (AR) Gaussian model. To see the dependencies between the time series, partial correlation graphs are used.  In partial correlation graph, partial correlations between time series are represented by undirected graphs where each vertex indicates the time series and the partial correlation between time series indicates the edge. 

It is known that the partial correlation between multiple time series can be computed from the inverse spectral density matrix of a multivariate stationary time series \citep{Time-Series}. Thus the partial correlations for the stationary latent process in our model are obtained in terms of the parameters of the latent AR model \citep{Brockwell} using the inverse spectral density matrix. Further, to obtain the partial correlations and causalities for the observed multivariable count data from the stationary latent process, we need to estimate unknown parameters of the AR model. Since the number of possible edges in the partial correlation graph can be large, overfitting can be a potential issue in this estimation problem \citep{overfitting}. Thus a maximum likelihood estimation (MLE), with $\ell_{1}$-type regularization term involving the partial correlation constraints, is formulated to get the desired sparsity in the estimated partial correlation and causality graph. 
%which is $\ell_{1}$-norm of the partial correlation constraints of the observed data is formulated to get the desired sparsity in the estimated partial correlation graph.
%However, the complete log-likelihood function of the observed time series of counts and the latent process is difficult to calculate and maximize. Hence, the estimation of parameters is accomplished by maximizing the conditional log-likelihood function with $\ell_{1}$-type regularization. 
%In addition, the desired sparsity in the estimated model is controlled by an $\ell_{1}$-type regularization term involving the partial correlation constraints. 
Since the latent process is unknown, a direct maximization of the log-likelihood function with $\ell_{1}$-type regularization is not possible. Following \citep{10.2307/2291149}, an MCEM algorithm with $\ell_{1}$-type regularization is proposed to estimate the model parameters. In this algorithm, samples from the conditional density function of the latent process given observed data, are generated using the Metropolis-Hastings algorithm \citep{10.2307/2291149}.
%Thus the marginal log-likelihood of observed data with $\ell_{1}$-type regularization is formulated. Since, the explicit expression for the marginal log-likelihood of observed data can be difficult, the expectation and maximization (EM) algorithm is used to estimate the unknown parameters. The EM algorithm however requires the conditional density function of latent process given observed data in the E-step of the algorithm. In our case, this density function is a mixture distribution of Gaussian and Poisson, and is impossible to calculate. To overcome this difficulty, a Monte Carlo expectation and maximization (MCEM) algorithm with $\ell_{1}$-type regularization is proposed to estimate the model parameters. In this algorithm, samples are generated from conditional density function of latent process given observed data using Metropolis-Hastings algorithm \cite{casella2002statistical}. 
%The Metropolis-Hastings algorithm \cite{casella2002statistical} is used to generate the samples from the conditional density function. 
%In this algorithm, we generate the samples from the conditional density function using a proposal density function which follows Gaussian distribution of the latent process. 

First, the proposed MCEM algorithm with $\ell_{1}$-type regularization is tested on a simulated multivariable count dataset. The regularization parameter is estimated using the trade-off curve \citep{Jitkomut} between log-likelihood and $\ell_{1}$-type regularization function. We collect all the graphs along the trade-off curve for different values of regularization parameter and then choose the partial correlation and causality graph based on the Bayes information criterion (BIC)\citep{Jitkomut}. 

Next, we use the developed algorithm to estimate the partial correlation and causality graphs for the observed dengue disease data. The dengue data under investigation consist of weekly dengue counts in a six year period, January $2010$ to December $2015$, collected from each ward of Greater Mumbai city, India. From these estimated partial correlation and causality graphs, we infer the number of undirected, incoming and outgoing edges for each ward. These edges and their weights give a quantitative measure of the interdependence  and directionality of the disease prevalence and development in the various wards. Surprisingly, we observe that some wards act as the epicentres of disease spread even though their absolute disease counts are relatively low.

%The observed dengue data has increasing trend and yearly seasonality with high dengue counts reported during monsoon months. 
%%The partial correlation and causality graph is estimated using the proposed MCEM algorithm with $\ell_{1}$-type regularization from the observed dengue data.
%%For each order of an AR model, we select the order based on the information criteria. For selected model, the trade-off curve is calculated. We rank all the models using the Bayes information criterion (BIC) and select the partial correlation graph and Granger causality graph with the lowest score. 
%From these estimated partial correlation and causality graphs, we have number of dengue counts, number of undirected edges, number of incoming and outgoing edges of each ward. From these parameters, we observe that there are some wards which are really causing the disease spread even if the dengue counts are low. In many places, it is proportional to the counts but there are some wards which we can not be predicted from counts but from our analysis we are able to predict which wards are really causing disease spread. We also observe that the percentage slum population is not proportional to the dengue counts of wards. Then from percentage slum population, number of incoming and outgoing edges of each ward, the disease spread is more towards the slum areas from other wards.

This manuscript is an extended version of \citep{sathish2019topology} which was presented in IEEE 58th Conference on Decision and Control (CDC), December 2019. It differs from \citep{sathish2019topology} as follows:
\begin{enumerate}
	\item In this manuscript, in addition to the problem of estimating partial correlation graphs studied in \citep{sathish2019topology}, the problem of estimating causality graphs is also considered. Causality graphs were not studied in \citep{sathish2019topology}.
	\item No real world application was presented in \citep{sathish2019topology}. Here, the developed algorithms are applied on observed weekly dengue disease counts in each of the 24 wards of Greater Mumbai city from 2010- 2015, to learn directed and undirected models for the spread of dengue disease.
\end{enumerate}

Our contributions in this paper are as follows.
\begin{enumerate}
	\item We formulate an optimization problem of maximum likelihood estimation (MLE) of observed multivariate count data with $\ell_{1}$-type regularization to estimate the partial correlation and causality graphs.
	\item A Monte Carlo expectation and maximization (MCEM) algorithm with $\ell_{1}$-type regularization on multivariate count data is proposed to solve the formulated problem.
	\item The proposed MCEM algorithm with $\ell_{1}$-type regularization is tested successfully using simulated data. 
	\item The partial correlation and causality graphs are estimated for the observed weekly dengue disease counts from each ward of Greater Mumbai city. 
\end{enumerate}

%%%%%%%%%%%%%%%%%%%%%%%%%%%%%%%%%%%%%%%%%%%%%%%%%%

\section{Preliminaries and problem formulation}\label{sec:Notation}
\subsection{Parameter driven model}\label{sec:Parameter}
Let \{$\mathbf{Y}(t)\in \mathbb{R}^{n}, t = 1\dots N\}$ be an observed multivariate time series of counts. An unobserved multivariate second order stationary latent process \{$\mathbf{X}(t)\in \mathbb{R}^{n}, t = 1\dots N\}$ is used to introduce the correlation between observations measured at successive time points and individual time series. To model $\mathbf{Y}(t)$ using $\mathbf{X}(t)$, a parameter driven model introduced in \citep{Bayesian:1} is considered. Let $ \textbf{z}_{t,i} $ be a $ q \times 1 $ vector of covariates at time $ t $ for the $ i $th time series and $ \bs{\beta}_{i} $ be a $ q \times 1 $ vector of regression coefficients corresponding to the $ i $th covariate vector $ \textbf{z}_{t,i} $. Covariate vector $ \textbf{z}_{t,i} $ is a function of time $ t $ and is usually used to model the explanatory variables such as trend, seasonality etc. In this model, the counts $ Y_{i}(t) $, given the latent process $ X_{i}(t) $ and the covariates $ \textbf{z}_{t,i} $, follow Poisson distribution with conditional mean  $\mu_{i}(t) = \text{exp}(\textbf{z}_{t,i}^{T}\bs{\beta}_{i} + X_{i}(t)) $, which is denoted by 
\begin{align}\label{A1}
Y_{i}(t)|X_{i}(t),\textbf{z}_{t,i}\sim \text{Poisson}(\mu_{i}(t)).
\end{align}
We assume the latent process $ X_{i}(t) $ in the model (\ref{A1}) follows an AR(p) process given by
\begin{align}\label{A2}
X_{i}(t) = \sum_{j = 1}^{n}a_{ij}(1)X_{j}(t-1)+\dots+\sum_{j = 1}^{n}a_{ij}(p)X_{j}(t-p)+ \epsilon_{i}(t),
\end{align} 
where $ a_{ij}(.) $'s are autoregressive coefficients and $\epsilon_{i}(t)$ is a Gaussian white noise having zero mean and variance $\sigma_{i}^{2}$.
%Let $\mathbf{A}^{i}_{k}$ = $[ a_{i1}(k)\dots a_{in}(k) ]$ is an $ 1\times n $ matrix, then the vector notation of equation \eqref{A2} is
%\begin{align}\label{A2vec}
%X_{i}(t) = \sum_{k = 1}^{p}A^{i}_{k}\mathbf{X}(t-k) +\epsilon_{i}(t).
%\end{align}   
Let $A_{k}$ = \{$ a_{ij}(k) $\} be an $ n\times n $ matrix of autoregressive coefficients and $\mathbf{\epsilon}(t)$ := ($\epsilon_{1}(t),\dots,\epsilon_{n}(t)$) be iid Gaussian random variables with covariance matrix $ \Sigma $ := diag($ \sigma_{1}^{2},\dots,\sigma_{n}^{2} $). For $k=1,\dots,p$, the vector notation of the AR(p) multivariate latent process is given by
\begin{align}\label{A3}
\mathbf{X}(t) = \sum_{k = 1}^{p}A_{k}\mathbf{X}(t-k) + \mathbf{\epsilon}(t),\qquad \mathbf{\epsilon}(t)\sim N(\mathbf{0},\Sigma).
\end{align}    
Let $\bs{\beta}\coloneqq \begin{bmatrix}\bs{\beta}_{1}&\hdots &\bs{\beta}_{n}\end{bmatrix}\in \mathbb{R}^{q\times n}$ and $\bs{\sigma}\coloneqq \begin{bmatrix}\sigma_{1}&\hdots &\sigma_{n}\end{bmatrix}^{T}\in \mathbb{R}^{n\times 1}$. Also assume that the parameter space is denoted by $\Omega$. Define the operator vec($\cdot$), which converts the matrix into a vector by setting columns of the matrix as a vector. Then, $\bs{\theta}\coloneqq \text{vec}\Big(\begin{bmatrix}\bs{\beta}^{T}&A_{1}&\hdots&A_{p}&\bs{\sigma}\end{bmatrix}\Big)\in\Omega\subseteq \mathbb{R}^{n(q+pn+1)\times 1} $ represents the complete set of unknown parameters described by models \eqref{A1} and \eqref{A3}.
%\subsection{Graphs}\label{sec:graph} \in \mathbb{R}^{q\times n}
\subsection{Partial correlation graphs}\label{sec:partial_graph}
Let $G = (\mathscr{V}, \mathscr{E}, \overrightarrow{\mathscr{E}})$ be a mixed graph (graph with directed and undirected edges), where $\mathscr{V}$ is a set of vertices, $ \mathscr{E} $ is a set of undirected edges $ \mathscr{E}\subseteq \mathscr{V}\times \mathscr{V} $ and $\overrightarrow{\mathscr{E}}$ is a set of directed edges $ \overrightarrow{\mathscr{E}} \subseteq \mathscr{V}\times \mathscr{V}$. The undirected edge between vertices $i$ and $j$ is denoted as $(i,j)$. If $(i,j)\in\mathscr{E}$ then also $(j,i)\in\mathscr{E}$. The directed edge between vertices $i$ and $j$ with direction from $i$ to $j$ is denoted as $(i,j)$. If $(i,j)\in\overrightarrow{\mathscr{E}}$ then $(j,i)\notin\overrightarrow{\mathscr{E}}$. 

Let $ \mathbf{X}_{\mathscr{V}\backslash\{i,j\}}\in\mathbb{R}^{(n-2)} $ denote $ \mathbf{X}(t) $ except the $i$th and $j$th elements i.e., $ X_{i}(t) $ and $ X_{j}(t) $ and $ \mathbf{Y}_{\mathscr{V}\backslash\{i,j\}}\in\mathbb{R}^{(n-2)} $ denote $ \mathbf{Y}(t) $ except $ Y_{i}(t) $ and $ Y_{j}(t) $. The partial correlation between any two time series of multivariate second order stationary processes is given in \citep{Graphical-Models3}. Define
\begin{equation}\label{A14}
\mathcal{E}_{i}(t)=X_{i}(t)-\mu_{i}^{*}-\sum_{u}\mathbf{a}^{*}_{i}(t-u)\mathbf{X}_{\mathscr{V}\backslash\{i,j\}},
\end{equation}
as the error process between $X_{i}(t)$ and best linear filter based on $ \mathbf{X}_{\mathscr{V}\backslash\{i,j\}} $ that minimizes $E\big(\mathcal{E}_{i}(t)\big)^{2}$. Here $\mu_{i}^{*}$ and $\{\mathbf{a}^{*}_{i}(u)\}$ are the optimal linear filters, exact expressions for which are available in \citep{Time-Series}. Similarly, the error process for $X_{j}(t)$ is $\mathcal{E}_{j}(t)$.
%Similarly, the error process between $X_{j}(t)$ and best linear filter based on $ \mathbf{X}_{\mathscr{V}\backslash\{i,j\}} $ that minimizes $E\big(\mathcal{E}_{j}(t)\big)^{2}$ is 
%\begin{equation}\label{A16}
%\mathcal{E}_{j}(t)=X_{j}(t)-\mu_{j}^{*}-\sum_{u}\mathbf{a}^{*}_{j}(t-u)\mathbf{X}_{\mathscr{V}\backslash\{i,j\}}.
%\end{equation}
%Let $ \mathcal{E}_{i}(t) $ be the error process after removing the effect of $ \mathbf{X}_{\mathscr{V}\backslash\{i,j\}} $. Similarly the error process for $ X_{j}(t) $ is $ \mathcal{E}_{j}(t) $.
If the cross-covariance between $ \mathcal{E}_{i}(t) $ and $ \mathcal{E}_{j}(t) $ for all time lag $h\in \mathbb{Z}$ is zero, i.e., $Cov(\mathcal{E}_{i}(t), \mathcal{E}_{j}(t-h)) = 0$, $\forall h\in \mathbb{Z}$, then $ X_{i}(t) $ and $ X_{j}(t) $ are defined to be partially uncorrelated given $\mathbf{X}_{\mathscr{V}\backslash\{i,j\}} $. 
%Following \cite{Graphical-Models3}, we represent the dependencies between $ Y_{i}(t) $ and $ Y_{j}(t) $ given $ \mathbf{Y}_{\mathscr{V}\backslash\{i,j\}} $ in terms of the partial correlations.

%Let $\bar{X}_{i}(t):=\{X_{i}(s),s<t\}$ be the set of all past values of $\{X_{i}(t)\}$ at time $t$ and $\mathbf{X}_{\mathscr{V}\backslash\{i\}}(t)\in\R^{(n-1)}$ denote vector of all the elements of $X(t)$ except the $i$th element i.e., $ X_{i}(t) $. 
From \citep{Granger}, the process $X_i(t)$ is said to be causal for the another process $X_j(t)$ if the prediction of $X_j(t)$ using all available information except $X_i(t)$, can be improved by adding the available information about $X_i(t)$. 
%We formulate this definition in terms of partial correlations. 
%Let $ X_{j}(t)\perp\bar{X}_{i}(t)|\bar{X}_{\mathscr{V}\backslash\{i\}} $ denote $ X_{j}(t) $ and $ \bar{X}_{i}(t) $ are partially uncorrelated given $\bar{X}_{\mathscr{V}\backslash\{i\}} $. 
%\begin{definition}[Causality]
%	$X_{i}(t)$ is not causal for $X_{j}(t)$ if and only if $ X_{j}(t) $ and $ \bar{X}_{i}(t) $ are partially uncorrelated given $\bar{\mathbf{X}}_{\mathscr{V}\backslash\{i\}} $. 
%Furthermore, $X_{i}(t)$ is not instantaneous causal for $X_{j}(t)$ relative to $\mathscr{A}(t)$ if and only if $ X_{i}(t)\perp X_{j}(t)|\overline{X}(t)\cup X_{\mathscr{V}\backslash\{i,j\}} $.
%\end{definition}

We define the partial correlations and causalities between observed time series of counts in terms of the partial correlations and causalities between second order stationary latent processes as follows.
\begin{definition}\label{Def_new}
	Two time series $ Y_{i}(t) $ and $ Y_{j}(t) $ are partially uncorrelated given $ \mathbf{Y}_{\mathscr{V}\backslash\{i,j\}} $ if $ X_{i}(t) $ and $ X_{j}(t) $ are partially uncorrelated given $ \mathbf{X}_{\mathscr{V}\backslash\{i,j\}} $ and $Y_{i}(t)$ is not causal for $Y_{j}(t)$ if $X_{i}(t)$ is not causal for $X_{j}(t)$.
\end{definition}
Now, the partial correlation and causality graph for the multivariate time series $\mathbf{Y}(t)$ is defined as,
\begin{definition}[Partial correlation and causality graph of $\mathbf{Y}(t)$]
	The Partial correlation and causality graph of $\mathbf{Y}(t)$ modeled in \eqref{A1} and \eqref{A3} is the mixed graph $G = (\mathscr{V}, \mathscr{E},  \overrightarrow{\mathscr{E}})$ with vertex set $ \mathscr{V}  = \{Y_{1}(t),\dots, Y_{n}(t)\} $, undirected edge set $ \mathscr{E} $ and directed edge set  $\overrightarrow{\mathscr{E}}$ such that there is no undirected edge between the $i^{th}$ and the $j^{th}$ node if and only if the corresponding latent variables $ X_{i}(t) $ and $ X_{j}(t) $ are partially uncorrelated given $  \mathbf{X}_{\mathscr{V}\backslash\{i,j\}} $, and there is no directed edge from the $i^{th}$ to the $j^{th}$ node if and only if $X_{i}(t)$ is not causal for $X_{j}(t)$.
	%	\begin{align*}
	%	(i, j)\notin \mathscr{E}\Longleftrightarrow Cov(\mathcal{E}_{i}(t), \mathcal{E}_{j}(t-h)) = 0, \forall h\in \mathbb{Z}
	%	\end{align*}
\end{definition}
Let $R_{XX}(h) =E[(\mathbf{X}(t))(\mathbf{X}(t-h))^{T}]$ be the autocovariance function of $\mathbf{X}(t)$ with time lag $h\in\mathbb{Z}$ and $S_{XX}(h)$ be the spectral density matrix of $ \mathbf{X}(t) $ i.e.,
\begin{align}\label{SDM}  
S_{XX}(\omega) = \sum_{h=-\infty}^{\infty}R_{XX}(h)e^{-jh\omega}.
\end{align}
It is known that the partial correlation graph can be obtained from the inverse spectral density matrix of multivariate second order stationary process \citep{Time-Series} which is given in the theorem below.
\begin{theorem}\label{par_bas}\text{\citep{Time-Series}}
	Consider the stationary time series $ \mathbf{X}(t)\in \mathbb{R}^{n} $ and assume that the spectral density matrix $S_{XX}(\omega)$ of $ \mathbf{X}(t) $ is invertible for all $\omega$. Then, $ X_{i}(t) $ and $ X_{j}(t) $ are partially uncorrelated given $ \mathbf{X}_{\mathscr{V}\backslash\{i,j\}} $ if and only if $((S_{XX}(\omega))^{-1})_{ij} = 0, \forall \omega$.
\end{theorem}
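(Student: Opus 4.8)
The plan is to pass to the frequency domain and reduce the ``for all lags'' covariance condition defining partial uncorrelatedness to a ``for all frequencies'' spectral condition, and then to identify the relevant spectral quantity with an entry of $S_{XX}(\omega)^{-1}$ by a Schur-complement computation. Concretely, write $R := \mathscr{V}\setminus\{i,j\}$ and partition $S_{XX}(\omega)$ into the $2\times 2$ block indexed by $\{i,j\}$, the $(n-2)\times(n-2)$ block indexed by $R$, and the two off-diagonal blocks. Since $S_{XX}(\omega)$ is Hermitian positive definite for every $\omega$, so is its principal submatrix $S_{RR}(\omega)$, which is therefore invertible for all $\omega$; this invertibility is exactly what makes the projection onto $\mathbf{X}_{R}$ well defined frequency-by-frequency.

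First I would set up the spectral representation $\mathbf{X}(t)=\int_{-\pi}^{\pi}e^{it\omega}\,dZ(\omega)$ with orthogonal increments satisfying $E\big[dZ(\omega)\,dZ(\omega)^{*}\big]=\tfrac{1}{2\pi}S_{XX}(\omega)\,d\omega$, and use the induced isometry between the time-domain Hilbert space generated by the process and the frequency-domain $L^{2}$ space. Under this isometry a linear filter acts as multiplication by its transfer function, and the best linear predictor of $\big(X_{i}(t),X_{j}(t)\big)$ based on $\{X_{k}(t-u):k\in R,\ u\in\mathbb{Z}\}$ --- which produces precisely the residual pair $\big(\mathcal{E}_{i}(t),\mathcal{E}_{j}(t)\big)$ of \eqref{A14} --- becomes the orthogonal projection onto the subspace spanned by the $R$-increments. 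The normal-equations argument then identifies the optimal filter with the transfer function $S_{\{i,j\}R}(\omega)\,S_{RR}(\omega)^{-1}$ and shows that the residual process is jointly stationary with spectral density matrix equal to the Schur complement
\[
S_{\mathcal{E}}(\omega)=S_{\{i,j\}\{i,j\}}(\omega)-S_{\{i,j\}R}(\omega)\,S_{RR}(\omega)^{-1}\,S_{R\{i,j\}}(\omega).
\]
The decisive algebraic step is the block-inversion identity: the $2\times 2$ block of $S_{XX}(\omega)^{-1}$ indexed by $\{i,j\}$ equals $S_{\mathcal{E}}(\omega)^{-1}$. As $S_{\mathcal{E}}(\omega)$ is $2\times 2$, its inverse has off-diagonal entry equal to $-\big(S_{\mathcal{E}}(\omega)\big)_{ij}$ divided by a nonvanishing determinant, so $\big((S_{XX}(\omega))^{-1}\big)_{ij}=0$ holds if and only if the residual cross-spectrum $\big(S_{\mathcal{E}}(\omega)\big)_{ij}$ vanishes. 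Finally, because the cross-covariance sequence $h\mapsto \mathrm{Cov}\big(\mathcal{E}_{i}(t),\mathcal{E}_{j}(t-h)\big)$ and the cross-spectrum form a Fourier-transform pair, the cross-covariance vanishes for every $h\in\mathbb{Z}$ if and only if the cross-spectrum vanishes for almost every $\omega$; continuity of the (rational) spectral density upgrades this to all $\omega$, yielding the stated equivalence, which is exactly Definition of partial uncorrelatedness.

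The main obstacle is the second step --- rigorously justifying that the time-domain residual process carries the Schur-complement spectral density. This requires defining the projection onto the infinite-dimensional predictor subspace (rather than a finite regression), verifying that the optimal filter $\{\mathbf{a}^{*}_{i}(u)\}$ corresponds under the isometry to $S_{\{i,j\}R}S_{RR}^{-1}$, and confirming joint stationarity of the residual with the claimed cross-spectrum. The remaining steps --- block inversion, the explicit $2\times 2$ inverse, and the Fourier duality between covariance and spectrum --- are routine once invertibility of $S_{RR}(\omega)$ and of $S_{\mathcal{E}}(\omega)$ is secured.
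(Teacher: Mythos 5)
The paper does not actually prove Theorem~\ref{par_bas}: it is imported verbatim from \citep{Time-Series} with no argument supplied, so there is no ``paper's proof'' to compare against. Judged on its own, your proposal is a correct reconstruction of the standard argument for this classical result. The chain you describe --- spectral representation with orthogonal increments, identification of the best linear filter based on $\mathbf{X}_{\mathscr{V}\backslash\{i,j\}}$ with the transfer function $S_{\{i,j\}R}S_{RR}^{-1}$ via the normal equations, the Schur-complement formula for the residual spectrum, the block-inversion identity giving $\big(S_{XX}(\omega)^{-1}\big)_{\{i,j\}\{i,j\}}=S_{\mathcal{E}}(\omega)^{-1}$, the explicit $2\times 2$ inverse, and Fourier duality between the residual cross-covariance sequence and the residual cross-spectrum --- is exactly the route taken in the literature, and each link holds under the stated hypothesis that $S_{XX}(\omega)$ is (Hermitian positive definite and) invertible for all $\omega$, which forces $S_{RR}(\omega)$ and $\det S_{\mathcal{E}}(\omega)$ to be nonvanishing. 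You correctly flag the only step requiring real work: showing that the time-domain residuals $\mathcal{E}_i,\mathcal{E}_j$ of \eqref{A14}, defined by projection onto the infinite-dimensional closed linear span of $\{X_k(t-u):k\in\mathscr{V}\backslash\{i,j\},\,u\in\mathbb{Z}\}$, are jointly stationary with spectral density equal to the Schur complement; this is where the isometry onto $L^2(S_{XX}\,d\omega)$ must be invoked carefully. One small caveat: the ``a.e.\ $\omega$ versus all $\omega$'' upgrade via continuity of a \emph{rational} spectral density is legitimate for the AR latent process used in this paper, but the theorem as stated is for a general stationary series, where one should either assume continuity of $S_{XX}$ or phrase the conclusion almost everywhere. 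This does not affect the use made of the theorem here.
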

Thus from \citep{Songsiri:2010} and Theorem \ref{par_bas}, the partial correlation relations for the stationary latent process $ \mathbf{X}(t) $, which follows an AR model \eqref{A3}, in terms of inverse spectral are given in the lemma below.
\begin{lemma}\label{lamma_par} 
	\begin{equation*}\label{conditional indp constriants}  
	(S_{XX}(\omega)^{-1})_{ij} = 0\Leftrightarrow (W_{k})_{ij} = 0~\text{and}~(W_{k})_{ji} = 0~\text{for}~k = 0,\dots,p,    
	\end{equation*}
	where
	\begin{equation}\label{partial_corr}
	W_{k} = \begin{cases}
	-\Sigma^{-1}+\sum_{l=1}^{p} A_{l}^{T}\Sigma^{-1}A_{l} , & k = 0\\
	-2\Sigma^{-1}A_{k}+2\sum_{l=1}^{p-k} A_{l}^{T}\Sigma^{-1}A_{l+k}, & k = 1,\dots,p
	\end{cases}
	\end{equation}
	with $A_{0} = I$. 
\end{lemma}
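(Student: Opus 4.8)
The plan is to obtain an explicit closed form for the inverse spectral density matrix of the latent AR($p$) process \eqref{A3} as a matrix-valued trigonometric polynomial in $\omega$, and then to read off when a fixed off-diagonal entry vanishes identically. First I would write the AR recursion \eqref{A3} as a linear filter with matrix transfer function $\Phi(e^{-j\omega})^{-1}$, where $\Phi(z) = I - \sum_{k=1}^{p} A_k z^{k}$, driven by the white-noise input $\epsilon(t)$ whose spectral density is the constant matrix $\Sigma$. The standard input--output relation for the spectral density of a linear filter then gives
\begin{align*}
S_{XX}(\omega) = \Phi(e^{-j\omega})^{-1}\,\Sigma\,\big(\Phi(e^{-j\omega})^{-1}\big)^{H},
\end{align*}
with $H$ the conjugate transpose. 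The hypothesis of Theorem \ref{par_bas} that $S_{XX}(\omega)$ is invertible makes the inverse well defined, and taking inverses yields the key identity $S_{XX}(\omega)^{-1} = \Phi(e^{-j\omega})^{H}\,\Sigma^{-1}\,\Phi(e^{-j\omega})$.

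Next I would expand this product. Substituting $\Phi(e^{-j\omega}) = I - \sum_{k=1}^{p} A_k e^{-jk\omega}$ and collecting powers of $e^{-j\omega}$, the inverse spectral density becomes
\begin{align*}
S_{XX}(\omega)^{-1} = \sum_{m=-p}^{p} C_m\, e^{-jm\omega}, \qquad C_0 = \Sigma^{-1} + \sum_{k=1}^{p} A_k^{T}\Sigma^{-1}A_k,
\end{align*}
with $C_m = -\Sigma^{-1}A_m + \sum_{k=1}^{p-m} A_k^{T}\Sigma^{-1}A_{k+m}$ for $1 \le m \le p$ and $C_{-m} = C_m^{T}$. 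Comparing with \eqref{partial_corr}, one finds $W_k = 2C_k$ for $k \ge 1$, while $C_0$ and $W_0$ differ only in the sign of the diagonal term $\Sigma^{-1}$. Since $\Sigma$ is diagonal and the statement concerns a pair $i \ne j$, this diagonal term contributes nothing off the diagonal, so in every case $(C_k)_{ij} = 0 \Leftrightarrow (W_k)_{ij} = 0$, and similarly for the $(j,i)$ entries.

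It remains to turn the pointwise-in-$\omega$ condition into finitely many algebraic conditions. For fixed $i \ne j$ the scalar $(S_{XX}(\omega)^{-1})_{ij} = \sum_{m=-p}^{p} (C_m)_{ij}\, e^{-jm\omega}$ is a trigonometric polynomial, and because the exponentials $\{e^{-jm\omega}\}_{m=-p}^{p}$ are linearly independent (orthogonal) on $[0,2\pi)$, it vanishes for all $\omega$ if and only if every coefficient $(C_m)_{ij}$ vanishes, $m=-p,\dots,p$. Using $C_{-m}=C_m^{T}$, i.e.\ $(C_{-m})_{ij}=(C_m)_{ji}$, folds the range down to $m=0,\dots,p$ at the cost of also requiring the transposed entries, giving exactly $(W_k)_{ij}=0$ and $(W_k)_{ji}=0$ for $k=0,\dots,p$ (the two $k=0$ conditions coinciding since $C_0$ is symmetric). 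The conceptual ingredients --- the spectral factorization of a vector AR process and the linear independence of exponentials --- are routine; the main obstacle I anticipate is the bookkeeping in the coefficient expansion, namely keeping the index ranges, the signs coming from $-A_k$, and the factor of $2$ consistent, together with the one genuinely delicate point that the mismatch in the sign of $\Sigma^{-1}$ in $W_0$ is harmless precisely because only off-diagonal entries ($i\neq j$) are tested.
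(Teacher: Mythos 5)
Your proof is correct, and it is essentially the argument the paper is implicitly relying on: the paper gives no proof of this lemma at all, merely citing Songsiri et al.\ and Theorem \ref{par_bas}, and the standard derivation behind that citation is exactly your spectral factorization $S_{XX}(\omega)^{-1}=\Phi(e^{-j\omega})^{H}\Sigma^{-1}\Phi(e^{-j\omega})$ followed by matching Fourier coefficients and invoking linear independence of the exponentials. Your coefficient bookkeeping checks out ($W_k=2C_k$ for $k\ge 1$, and $W_0-C_0=-2\Sigma^{-1}$ is diagonal), and you correctly flag the one point the lemma leaves implicit: the equivalence is only claimed for off-diagonal entries $i\ne j$, which is precisely why the sign mismatch on the $\Sigma^{-1}$ term in $W_0$ (and the factor of $2$) is harmless.
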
 
The following result is evident from Definition \ref{Def_new}, Theorem \ref{par_bas} and \eqref{conditional indp constriants}.
\begin{proposition}\label{par_corr}
	$ Y_{i}(t) $ and $ Y_{j}(t) $ are partially uncorrelated given $ \mathbf{Y}_{\mathscr{V}\backslash\{i,j\}} $ if $(\boldsymbol{W}_{k})_{ij} = 0~~\text{and}~~ (\boldsymbol{W}_{k})_{ji} = 0, ~\text{for}~ k = 0,\dots,p$.
\end{proposition}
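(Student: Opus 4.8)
The plan is to chain the three results already established in the excerpt, reading the biconditionals of Lemma~\ref{lamma_par} and Theorem~\ref{par_bas} from the algebraic hypothesis on the $W_k$ towards the statement about the observed series. No new machinery is required: the proposition is a transitivity argument that passes through the $W_k$-layer, the spectral-density layer, and finally the latent-to-observed definitional layer.

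First I would take as given the hypothesis that $(\boldsymbol{W}_k)_{ij} = 0$ and $(\boldsymbol{W}_k)_{ji} = 0$ for every $k = 0, \dots, p$. Feeding this into the equivalence of Lemma~\ref{lamma_par} (in the ``$\Leftarrow$'' direction) yields $\big(S_{XX}(\omega)^{-1}\big)_{ij} = 0$ for all $\omega$. Next, the ``if'' direction of Theorem~\ref{par_bas} converts this vanishing off-diagonal entry of the inverse spectral density into the assertion that $X_i(t)$ and $X_j(t)$ are partially uncorrelated given $\mathbf{X}_{\mathscr{V}\backslash\{i,j\}}$. Finally, Definition~\ref{Def_new} transfers partial uncorrelation of the latent pair $(X_i,X_j)$ to partial uncorrelation of the observed pair $(Y_i,Y_j)$ given $\mathbf{Y}_{\mathscr{V}\backslash\{i,j\}}$, which is exactly the claimed conclusion. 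I would also note that the statement is deliberately a one-way implication: Definition~\ref{Def_new} supplies only the latent-to-observed direction, so the result should not be upgraded to a biconditional even though the two intermediate links are genuine equivalences.

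The only genuine checkpoint is that Theorem~\ref{par_bas} presupposes $S_{XX}(\omega)$ to be invertible for all $\omega$, so I would confirm that the AR model \eqref{A3} guarantees this. Writing $A(e^{-j\omega}) = I - \sum_{k=1}^{p} A_k e^{-jk\omega}$, the inverse spectral density factors, up to a positive constant, as $A(e^{-j\omega})^{*}\,\Sigma^{-1}\,A(e^{-j\omega})$; since $\Sigma$ is a positive definite diagonal matrix and second-order stationarity of $\mathbf{X}(t)$ forces $A(z)$ to be nonsingular on the unit circle, $S_{XX}(\omega)^{-1}$ is well defined and invertible for every $\omega$. This same factorization is precisely what produces the $W_k$ of \eqref{partial_corr} as the Fourier coefficients of $S_{XX}(\omega)^{-1}$, so requiring the $(i,j)$ and $(j,i)$ coefficients to vanish for $k = 0,\dots,p$ is identical to requiring the $(i,j)$ entry of $S_{XX}(\omega)^{-1}$ to vanish for all $\omega$; the first link of the chain is thus an identity rather than something needing independent argument. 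The ``main obstacle'' therefore reduces to bookkeeping the invertibility hypothesis, and everything else is transitivity.
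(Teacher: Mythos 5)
Your proposal is correct and follows essentially the same route as the paper, which simply notes that the proposition is ``evident from Definition~\ref{Def_new}, Theorem~\ref{par_bas} and \eqref{conditional indp constriants}'' --- i.e., the same three-link chain from the vanishing of the $(i,j)$ and $(j,i)$ entries of the $W_k$, through the inverse spectral density, to the latent and then observed processes. Your additional verification that the AR structure with positive definite $\Sigma$ guarantees the invertibility hypothesis of Theorem~\ref{par_bas}, and your remark that the implication is deliberately one-directional because Definition~\ref{Def_new} only supplies the latent-to-observed direction, are both correct refinements that the paper leaves implicit.
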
 
The following theorem states that the causality graph of the second order stationary process $X(t)$ can be found in terms of AR coefficients given in \eqref{A3} (\citet{Tjstheim,HSIAO1982}). 
\begin{theorem}\label{causal_AR}
	The time series $X_{i}(t)$ does not cause $X_{j}(t)$ if and only if the corresponding components $A_{k}(j,i)$ vanish for all $k$ i.e.,
	\begin{align} 
	X_{i}(t)~\text{does not cause}~ X_{j}(t)\Longleftrightarrow A_{k}(j,i)=0, ~\forall k\in\{1,\dots,p\}.
	\end{align}
\end{theorem}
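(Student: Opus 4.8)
The plan is to recast Granger non-causality as a statement about orthogonal projections in the Hilbert space of zero-mean finite-variance random variables, and then to exploit the AR structure \eqref{A3} together with the diagonality of the noise covariance $\Sigma$. Write $\mathcal{F}_{t-1}=\overline{\mathrm{sp}}\{X_l(s): 1\le l\le n,\ s\le t-1\}$ for the closed linear span of the full past and $\mathcal{G}_{t-1}=\overline{\mathrm{sp}}\{X_l(s): l\ne i,\ s\le t-1\}$ for the past with the history of $X_i$ deleted; throughout $\langle\cdot,\cdot\rangle$ denotes covariance. Since the prediction improvement in Granger's definition is measured by mean-squared error and $\mathcal{G}_{t-1}\subseteq\mathcal{F}_{t-1}$, ``$X_i$ does not cause $X_j$'' is equivalent to saying that the best linear predictor of $X_j(t)$ from $\mathcal{F}_{t-1}$ coincides with the one from $\mathcal{G}_{t-1}$. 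From \eqref{A3} the full-past predictor is $\widehat{X}_j(t)=\sum_{k=1}^{p}\sum_{l=1}^{n}A_k(j,l)X_l(t-k)$ with residual $\epsilon_j(t)\perp\mathcal{F}_{t-1}$; as the residual is also orthogonal to $\mathcal{G}_{t-1}$, projecting $X_j(t)$ onto $\mathcal{G}_{t-1}$ equals projecting $\widehat{X}_j(t)$. Hence non-causality reduces to the single membership condition $\widehat{X}_j(t)\in\mathcal{G}_{t-1}$.

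For the ``if'' direction, if $A_k(j,i)=0$ for every $k$ then $\widehat{X}_j(t)=\sum_{k}\sum_{l\ne i}A_k(j,l)X_l(t-k)$ is already a combination of the generators of $\mathcal{G}_{t-1}$, so $\widehat{X}_j(t)\in\mathcal{G}_{t-1}$ and the two predictors agree; note this direction does not use the form of $\Sigma$. For the converse, assume non-causality and subtract the generators $\sum_{k}\sum_{l\ne i}A_k(j,l)X_l(t-k)\in\mathcal{G}_{t-1}$ to obtain $\sum_{k=1}^{p}A_k(j,i)X_i(t-k)\in\mathcal{G}_{t-1}$. The key use of $\Sigma=\mathrm{diag}(\sigma_1^2,\dots,\sigma_n^2)$ is that $\epsilon_i(t-1)\perp\mathcal{G}_{t-1}$: indeed $\epsilon_i(t-1)\perp\mathcal{F}_{t-2}$ by the model, while for $l\ne i$, writing $X_l(t-1)=\widehat{X}_l(t-1)+\epsilon_l(t-1)$ gives $\langle\epsilon_i(t-1),X_l(t-1)\rangle=\Sigma_{il}=0$. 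Pairing $\sum_k A_k(j,i)X_i(t-k)$ with $\epsilon_i(t-1)$ kills every lag $k\ge2$ (as $X_i(t-k)\in\mathcal{F}_{t-2}$) and yields $A_1(j,i)\sigma_i^2=0$, so $A_1(j,i)=0$ since $\sigma_i^2>0$.

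The remaining coefficients are removed by the same device applied to the Gram--Schmidt residuals $\nu_k:=X_i(t-k)-P_{\mathcal{H}_k}X_i(t-k)$, where $\mathcal{H}_k=\mathcal{G}_{t-1}\vee\mathrm{sp}\{X_i(t-1),\dots,X_i(t-k+1)\}$. These are mutually orthogonal, orthogonal to $\mathcal{G}_{t-1}$, and satisfy $X_i(t-k)=(\text{element of }\mathcal{G}_{t-1})+\sum_{m<k}d_{km}\nu_m+\nu_k$. Substituting into the membership relation and requiring the $\nu$-part to vanish turns $\sum_k A_k(j,i)X_i(t-k)\in\mathcal{G}_{t-1}$ into a triangular, unit-diagonal linear system in the unknowns $A_k(j,i)$, whose only solution is the zero vector; this gives $A_k(j,i)=0$ for all $k$ and closes the equivalence.

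The main obstacle is the final step of the converse: one must know that $\nu_1,\dots,\nu_p$ are linearly independent, equivalently that no nontrivial combination $\sum_k A_k(j,i)X_i(t-k)$ lies in $\mathcal{G}_{t-1}$. This is precisely where the standing hypothesis of Theorem~\ref{par_bas}, that $S_{XX}(\omega)$ is invertible for all $\omega$, enters: since $S_{XX}$ is rational and continuous, invertibility yields $\inf_\omega\lambda_{\min}(S_{XX}(\omega))>0$, whence every finite block-Toeplitz covariance matrix built from $R_{XX}(\cdot)$ is positive definite. Consequently the process admits no degenerate linear relations among its lagged values, the $\nu_k$ are nonzero and independent, and the triangular system is nonsingular. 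I expect the only delicate bookkeeping to be the passage from a finite window to the projection onto the infinite past $\mathcal{G}_{t-1}$, which is handled by working with closed linear spans and the uniform spectral lower bound.
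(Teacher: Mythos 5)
The paper does not actually prove Theorem \ref{causal_AR}: it is stated as a known characterization and attributed to Tj{\o}stheim and Hsiao, so there is no in-text argument to compare yours against --- what you have produced is a self-contained proof of a cited result. That proof is essentially correct. The reduction of Granger non-causality to the membership condition $\widehat{X}_j(t)\in\mathcal{G}_{t-1}$ is sound (for this Gaussian model the best predictor is the linear one, so measuring ``improvement'' by the mean-squared error of linear projections loses nothing, and equality of the two mean-squared errors forces equality of the two projections); the ``if'' direction is immediate; and the converse is handled correctly: the computation $\langle\epsilon_i(t-1),X_l(t-1)\rangle=\Sigma_{il}=0$ for $l\neq i$ is exactly where the diagonality of $\Sigma$ enters, and the Gram--Schmidt triangularization combined with the uniform lower bound $\inf_\omega\lambda_{\min}(S_{XX}(\omega))>0$ (automatic for a stable AR($p$) with $\Sigma\succ 0$, and in any case assumed in Theorem \ref{par_bas}) correctly excludes a nontrivial relation $\sum_{k}A_k(j,i)X_i(t-k)\in\mathcal{G}_{t-1}$, since it makes the map from square-summable coefficient sequences to the closed linear span bounded below and hence keeps each $\nu_k$ nonzero. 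Two minor observations: the explicit $\epsilon_i(t-1)$ pairing that gives $A_1(j,i)=0$ is just the statement $\nu_1\neq 0$, so that step is subsumed by the general triangular argument; and the proof does not truly require $\Sigma$ diagonal, only $\Sigma$ nonsingular, which matches the generality in which the cited references state the result.
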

The following definitions are useful for future development in later sections. Let $IW_i$ be the total incoming edge weight of $i^{th}$ node from other nodes in the causality graph, i.e.,
\begin{align}\label{inc_wei}
IW_{i} = \sum_{k=1}^{p}\sum_{j=1,j\neq i}^{n}A_{k}(i,j)
\end{align}
Similarly, let $OW_i$ be the total outgoing edge weight of $i^{th}$ node to other nodes in the causality graph, i.e.,
\begin{align}\label{out_wei}
OW_{i} = \sum_{k=1}^{p}\sum_{j=1,j\neq i}^{n}A_{k}(j,i)
\end{align}

\subsection{Problem formulation}\label{sec:problem}
From Theorem \ref{par_bas} and Proposition \ref{par_corr}, we need to estimate the inverse spectral density matrix of multivariate latent process $ \mathbf{X}(t) $ to estimate the partial correlations between the observed multiple time series of counts $ \mathbf{Y}(t) $. Since the number of possible edges in the partial correlation and causality graph can be large (up to $n^{2}$), overfitting can be a potential issue in this estimation problem \citep{overfitting}.
%Some form of regularization is often used to overcome this difficulty. LASSO \cite{LASSO}, ridge regression \cite{Ridge_reg} etc are some methods that perform variable selection and regularization. However, LASSO forces certain parameters to be set to zero while ridge regression only shrinks the size of the parameters without setting any of them to zero. Thus, the LASSO technique is used here to introduce sparsity in the inverse spectral density matrix for improving the accuracy in the estimated partial correlations between the time series. The $\ell_{1}$-type regularization is used in LASSO which consists the $\ell_{1}$-norm of partial correlation constraints given in Proposition \ref{par_corr}. The $\ell_{1}$-norm of a vector $\mathbf{x}=\begin{bmatrix}x_{1}&\dots&x_{n}\end{bmatrix}^{T}\in \mathbb{R}^{n}$ is defined as $\Vert\mathbf{x}\Vert_{1}=\vert x_{1}\vert+\dots+\vert x_{n}\vert$. 
LASSO \citep{LASSO} is one of the regularization methods to overcome this difficulty by setting certain parameters to zero. This technique is used here to introduce sparsity in the inverse spectral density matrix. In LASSO, the maximum likelihood estimation problem is regularized using the $\ell_{1}$-norm of the partial correlation constraints given in Proposition \ref{par_corr}. The $\ell_{1}$-norm of a vector $\mathbf{x}=\begin{bmatrix}x_{1}&\dots&x_{n}\end{bmatrix}^{T}\in \mathbb{R}^{n}$ is defined as $\Vert\mathbf{x}\Vert_{1}=\vert x_{1}\vert+\dots+\vert x_{n}\vert$. Let $\mathbf{Y}:=\begin{bmatrix}\mathbf{Y}(1)\dots \mathbf{Y}(N)\end{bmatrix}\in\mathbb{R}^{n\times N}$ be the observed data matrix and $\mathbf{X}:=\begin{bmatrix}\mathbf{X}(1)\dots \mathbf{X}(N)\end{bmatrix}\in\mathbb{R}^{n\times N}$ be the latent process data matrix. 
%Partition $\mathbf{X}^{f}:=\begin{bmatrix}\mathbf{X}^{p}& \mathbf{X}\end{bmatrix}$ where $\mathbf{X}^{p}:=\begin{bmatrix}\mathbf{X}(1)\dots \mathbf{X}(p)\end{bmatrix}\in\mathbb{R}^{n\times p}$ and $\mathbf{X}:=\begin{bmatrix}\mathbf{X}(p+1)\dots \mathbf{X}(N)\end{bmatrix}\in\mathbb{R}^{n\times (N-p)}$. 
From \citep{Bayesian:1}, the joint probability density function (PDF) of $\mathbf{Y}$ and $\mathbf{X}$ is
\begin{align}\label{conditional_PDF}
&P(\mathbf{Y},\mathbf{X},\bs{\theta})= P(\mathbf{Y},\bs{\theta}|\mathbf{X})P(\mathbf{X},\bs{\theta})\nonumber\\
&=\Big\{\prod_{i=1}^{n}\prod_{t=1}^{N}P\big(Y_{i}(t),\bs{\theta}|X_{i}(t)\big)\prod_{t=p+1}^{N}P\big(\mathbf{X}(t), \bs{\theta}|\mathbf{X}(t-1),\nonumber\\
&\qquad\qquad\qquad\qquad\qquad\dots,\mathbf{X}(t-p)\big)\Big\} P\big(\mathbf{X}(1),\dots,\mathbf{X}(p),\bs{\theta}\big)\nonumber\\&= \prod_{i=1}^{n}\prod_{t=1}^{N}\text{exp}\big(X_{i}(t)Y_{i}(t)+\mathbf{z}_{t,i}^{T}\bs{\beta}_{i} Y_{i}(t)-e^{X_{i}(t)+\mathbf{z}_{t,i}^{T}\bs{\beta}_{i}}\big)\nonumber\\&\qquad\qquad\qquad\qquad\prod_{t=p+1}^{N}\frac{1}{\sqrt{2\pi|\Sigma|}}\text{exp}\Big\{-\frac{1}{2}\big(\epsilon(t)^{T}\Sigma^{-1}\epsilon(t)\big)\Big\}\nonumber\\&\qquad\qquad\qquad\qquad P\big(\mathbf{X}(1),\dots,\mathbf{X}(p),\bs{\theta}\big).
\end{align}
where $\epsilon(t)=\mathbf{X}(t) - \sum_{k = 1}^{p}A_{k}\mathbf{X}(t-k)$ and $P\big(\mathbf{X}(1),\dots,\mathbf{X}(p),\bs{\theta}\big)$ is the joint probability density function of initial values $\mathbf{X}(1),\dots,\mathbf{X}(p)$. The calculation of probability density function $P\big(\mathbf{X}(1),\dots,\mathbf{X}(p),\bs{\theta}\big)$ is given in \citep{helmut2005new}, p.29. Let
\begin{align}
\mathscr{X}(t)&:=\begin{bmatrix}\mathbf{X}(t)\\\mathbf{X}(t-1)\\\vdots\\\mathbf{X}(t-p+1)\end{bmatrix}\in\mathbb{R}^{np\times 1},\quad \mathscr{W}(t) := \begin{bmatrix}\epsilon(t)\\0\\\vdots\\0\end{bmatrix}\in\mathbb{R}^{np\times 1},\nonumber\\
\mathbf{A}&:=\begin{bmatrix}A_{1}&A_{2}&\dots&A_{p-1}&A_{p}\\I_{n}&0&\dots&0&0\\0&I_{n}&\dots&0&0\\\vdots&\vdots&\cdots&\vdots&\vdots\\0&0&\dots&I_{n}&0\end{bmatrix}\in\mathbb{R}^{np\times np}.
\end{align}
Then, the model \eqref{A3} becomes
\begin{align}\label{big_model}
\mathscr{X}(t) = \mathbf{A}\mathscr{X}(t-1)+\mathscr{W}(t).
\end{align}
Note that $\mathbf{X}(t)$ is a second-order stationary process, and $\mathbf{X}(t)$ and $\epsilon(t)$ are uncorrelated. Thus, from \eqref{big_model} the auto-covariance function of $\mathscr{X}(t)$ with zero time lag is
\begin{align*}
E[\mathscr{X}(t)\mathscr{X}(t)^{T}] = \mathbf{A}E[\mathscr{X}(t-1)\mathscr{X}(t-1)^{T}]\mathbf{A}^{T} + E[\mathscr{W}(t)\mathscr{W}(t)^{T}],
\end{align*}
\begin{align}\label{auto}
R_{\mathscr{X}\mathscr{X}}(0) = \mathbf{A}R_{\mathscr{X}\mathscr{X}}(0)\mathbf{A}^{T} + R_{\mathscr{W}\mathscr{W}}(0),
\end{align}
where 
\begin{align}
R_{\mathscr{X}\mathscr{X}}(0)&=\begin{bmatrix}R_{XX}(0)~~~R_{XX}(1)~\dots~ R_{XX}(p-1)\\R_{XX}(1)^{T}~~R_{XX}(0)~\dots~ R_{XX}(p-2)\\\vdots~~~~~~~~~~\vdots~~~~~~\cdots~~~~~~\vdots\\R_{XX}(p-1)^{T}~R_{XX}(p-2)^{T}~\dots~ R_{XX}(0)\end{bmatrix}\in\mathbb{R}^{np\times np},\nonumber\\
R_{\mathscr{W}\mathscr{W}}(0)&=\begin{bmatrix}\Sigma&0&\dots&0\\0&0&\dots& 0\\\vdots&\vdots&\cdots&\vdots\\0&0&\dots&0\end{bmatrix}\in\mathbb{R}^{np\times np}.
\end{align}
The vectorized equation of \eqref{auto} is
\begin{align}\label{auto1}
\text{vec}(R_{\mathscr{X}\mathscr{X}}(0)) &= (\mathbf{A}\otimes\mathbf{A})\text{vec}(R_{\mathscr{X}\mathscr{X}}(0)) +\text{vec}(R_{\mathscr{W}\mathscr{W}}(0)),\nonumber\\
\text{vec}(R_{\mathscr{X}\mathscr{X}}(0)) &= (I_{np} - \mathbf{A}\otimes\mathbf{A})^{-1}\text{vec}(R_{\mathscr{W}\mathscr{W}}(0)),
\end{align} 
where `$\otimes$' denotes the Kronecker product. Therefore, the initial values $\mathbf{X}(1),\dots,\mathbf{X}(p)$ follows a Gaussian distribution \citep{helmut2005new} with zero mean and variance is $R_{\mathscr{X}\mathscr{X}}(0)$ which can be calculated from \eqref{auto1}. 

%Since, the past data $\mathbf{X}(1),\mathbf{X}(2)\dots\mathbf{X}(p)$ are dependent to each other, the PDF $P(\mathbf{X}^{p},\bs{\theta})$ is difficult to calculate and maximize. Thus conditional PDF $P(\mathbf{Y},\mathbf{X},\bs{\theta}/\mathbf{X}^{p})$ with $\ell_{1}$-type regularization is used to estimate $\bs{\theta}$. 
The exact log-likelihood function of $\mathbf{Y}$ and $\mathbf{X}$ is $l(\mathbf{Y},\mathbf{X},\bs{\theta}) := log(P(\mathbf{Y},\mathbf{X},\bs{\theta}))$.
%\begin{align}
%l(\mathbf{Y},\mathbf{X}^{f},\bs{\theta}) := log(P(\mathbf{Y},\mathbf{X},\bs{\theta}/\mathbf{X}^{p})).
%\end{align}
%Since, the complete log likelihood function involving $\mathbf{Y}$ and $\mathbf{X}^{f}$  and maximize, the conditional maximum likelihood estimation (MLE) with $\ell_{1}$-type regularization is used to estimate $\bs{\theta}$. 
%$\mathbf{Z}:=\begin{bmatrix}\mathbf{z}_{1}\dots \mathbf{z}_{N}\end{bmatrix}\in\mathbb{R}^{n\times qN}$, where $\mathbf{z}_{i}:=\begin{bmatrix}\mathbf{z}_{i,1}\dots \mathbf{z}_{i,n}\end{bmatrix}^{T}\in\mathbb{R}^{n\times q}$. 
%Let $l(\mathbf{Y}, \mathbf{X}, \bs{\theta}):=log(P(\mathbf{Y}, \mathbf{X}, \mathbf{Z}|\mathbf{X}^{p}, \bs{\theta}))$ be the log-likelihood function of data matrix $\mathbf{Y}$, $\mathbf{X}$ and $\mathbf{Z}$ given $\mathbf{X}^{p}$. 
Let 
\begin{align}\label{complete_lik}
L(\mathbf{Y}, \mathbf{X}, \bs{\theta}):=l(\mathbf{Y}, \mathbf{X}, \bs{\theta})-\gamma h_{1}(W_{0},W_{1},\hdots, W_{p})
\end{align}
%be the conditional log-likelihood function of $\mathbf{Y}$ and $\mathbf{X}$ given $\mathbf{X}^{p}$ with $\ell_{1}$-type regularization 
where $\gamma\geq 0$ is a regularization parameter and the regularization function $h_{1}(W_{0},W_{1},\hdots, W_{p}) = \sum_{j>i}^{} \sum_{k = 0}^{p}\big\{|(W_{k})_{ij}|+|(W_{k})_{ji}|\big\}$ is 
%We next formulate the conditional MLE with $\ell_{1}$-type regularization to estimate $\bs{\theta}$.
%\begin{problem}\label{P1}
%	\begin{align}
%	\max_{\bs{\theta}\in\Omega}~L(\mathbf{Y}, \mathbf{X}, \bs{\theta}).
%	\end{align}	
%\end{problem}
%The regularization function $h_{1}(W_{0},W_{1},\hdots, W_{p})$ is an $\ell_{1}$-norm of off-diagonal elements of matrices $W_{0},W_{1},\hdots, W_{p}$ given in \eqref{partial_corr}.
%%Thus it introduces sparsity in the estimated parameters $\bs{\theta}^{*}$. 
%The amount of sparsity of the estimated inverse spectral density matrix is controlled by the regularization parameter $\gamma$. As $\gamma$ varies, the sparsity pattern varies in the estimated inverse spectral density matrix from dense ($\gamma $ small) to diagonal ($\gamma $ large).
the $\ell_{1}$-norm of off-diagonal elements of matrices $W_{0},W_{1},\hdots, W_{p}$ given in \eqref{partial_corr}.
%Thus it introduces sparsity in the estimated parameters $\bs{\theta}^{*}$. 
The amount of sparsity of the estimated inverse spectral density matrix is controlled by the regularization parameter $\gamma$. As $\gamma$ varies, the sparsity pattern varies in the estimated inverse spectral density matrix from dense ($\gamma $ small) to diagonal ($\gamma $ large). We aim to maximize regularized log-likelihood $L(\mathbf{Y}, \mathbf{X}, \bs{\theta})$ with respect to $\bs{\theta}$ over $\Omega$ to estimate the optimal parameters. To maximize $L(\mathbf{Y}, \mathbf{X}, \bs{\theta})$ directly, data matrix $\mathbf{X}$ should be known. However, in reality the latent data matrix $\mathbf{X}$ is unknown. The marginal PDF of $\mathbf{Y}$ is $P(\mathbf{Y}, \bs{\theta}):=\int_{\mathbf{X}}P(\mathbf{Y},\mathbf{X}, \bs{\theta})d\mathbf{X}$. Let $l(\mathbf{Y}, \bs{\theta}):=log(P(\mathbf{Y}, \bs{\theta}))$ be the marginal log-likelihood of $\mathbf{Y}$. The marginal log-likelihood of $\mathbf{Y}$ with $\ell_{1}$-type regularization 
\begin{align}\label{marginal_lik}
L(\mathbf{Y}, \bs{\theta}):=l(\mathbf{Y}, \bs{\theta}) -\gamma h_{1}(W_{0},W_{1},\hdots, W_{p}),
\end{align} 
is used to estimate $\bs{\theta}$. We next formulate the optimization problem to estimate $\bs{\theta}$.
\begin{problem}\label{P1}
	Find
	\begin{align}
	\bs{\theta}^{*} = \text{arg}\hspace{0.3mm}\max_{\bs{\theta}\in\Omega}~L(\mathbf{Y}, \bs{\theta}).
	\end{align}	
\end{problem}

\subsection{Challenges in solution of Problem \ref{P1}}\label{Challenges}
To solve Problem \ref{P1} directly, we need to calculate $L(\mathbf{Y}, \bs{\theta})$ explicitly, which can be difficult because of $nN$ multiple integrals. To overcome this difficulty and unknown latent data matrix $\mathbf{X}$, we use the expectation and maximization (EM) algorithm \citep{emalgo} to solve Problem \ref{P1}. 
%However, the joint log-likelihood function with $\ell_{1}$-type regularization $L(\mathbf{Y}, \mathbf{X}, \bs{\theta})$ is easy to calculate. 
The EM algorithm maximizes the function $L(\mathbf{Y}, \bs{\theta})$ by working with $L(\mathbf{Y}, \mathbf{X}, \bs{\theta})$ and the conditional density function $P(\mathbf{X},\bs{\theta}|\mathbf{Y})$ of the latent data matrix $\mathbf{X}$ given observed count data matrix $\mathbf{Y}$.

Note that the standard EM algorithm is modified with the regularization term in the following Algorithm \ref{EM_algo}. The tolerance ($\delta$) is adjusted according to the accuracy requirement in the estimated parameters.
\begin{algorithm}[h!]
	%\small
	\caption{Expectation and Maximization (EM) algorithm with $\ell_{1}$-type regularization}
	\label{EM_algo}
	\begin{algorithmic}
		\State \textit{\bf Input:} For $k=1$, initial condition $\bs{\theta}^{(1)}$. 
		\State \textit{\bf Output: } The estimated parameter is $\bs{\theta}^{*}$. 
	\end{algorithmic}
	\begin{algorithmic}[1]
		\While{$tol>\delta$}
		\State {\it{E-step}}:
		\begin{align*}
		\hspace{-0.5cm}Q(&\bs{\theta};\bs{\theta}^{(k)})=E_{\bs{\theta}^{(k)}}\{l(\mathbf{Y}, \mathbf{X},\bs{\theta})|\mathbf{Y}\}-\gamma h_{1}(
		W_{0},W_{1},\hdots, W_{p}),\nonumber\\&=\int_{X}^{}l(\mathbf{Y}, \mathbf{X},\bs{\theta})P(\mathbf{X},\bs{\theta}^{(k)}|\mathbf{Y})d\mathbf{X}-\gamma h_{1}(
		W_{0},W_{1},\hdots, W_{p}).
		\end{align*}
		\State {\it{M-step}}: Maximize the function $Q(\bs{\theta};\bs{\theta}^{(k)})$ with respect to $ \bs{\theta} $ over space $\Omega$ and update the maximizer,
		\begin{align}
		\bs{\theta}^{(k+1)}= \text{arg}\hspace{0.3mm}\max_{\bs{\theta}\in\Omega}^{}Q(\bs{\theta};\bs{\theta}^{(k)}).
		\end{align}
		\State $tol=\frac{\Vert\bs{\theta}^{(k+1)}-\bs{\theta}^{(k)}\Vert}{\Vert\bs{\theta}^{(k)}\Vert}$.
		\State $k=k+1$.
		\EndWhile
	\end{algorithmic}
	\begin{algorithmic}
		\State $\bs{\theta}^{*}=\bs{\theta}^{(k)}$. 
	\end{algorithmic}
\end{algorithm}

The log-likelihood function $l(\mathbf{Y}, \mathbf{X}, \bs{\theta})$ is required to be known to solve the Algorithm \ref{EM_algo}. However, the conditional density function $P(\mathbf{X},\bs{\theta}^{(k)}|\mathbf{Y})$ of the latent data matrix $\mathbf{X}$ given observed count data matrix $\mathbf{Y}$ is impossible to calculate at the $k$th step in the Algorithm \ref{EM_algo} because it is a mixture of Gaussian and Poisson distributions. Here Algorithm \ref{EM_algo} cannot be implemented directly. To overcome this difficulty, Monte Carlo techniques are used to draw samples from $P(\mathbf{X},\bs{\theta}^{(k)}|\mathbf{Y})$ and thereby numerically compute the integral given in the E-step of Algorithm \ref{EM_algo}. The  Metropolis-Hastings algorithm \citep{casella2002statistical} is used to generate the samples from $P(\mathbf{X},\bs{\theta}^{(k)}|\mathbf{Y})$.

The generation of samples from $P(\mathbf{X},\bs{\theta}^{(k)}|\mathbf{Y})$ using Metropolis-Hastings algorithm for the parameter driven model with AR(1) process is given in \citep{Bayesian:1}. In \citep{Bayesian:1}, the proposal distributions are calculated in Metropolis-Hastings algorithm with AR(1) process. Similarly, in this paper the proposal distributions for Metropolis-Hastings algorithm with AR(p) process are given. Let
\begin{align}
\Lambda&:=R_{\mathscr{X}\mathscr{X}}(0)^{-1}\nonumber\\
&:=\begin{bmatrix}\Lambda_{11}&\Lambda_{12}&\dots&\Lambda_{1p}\\\Lambda_{12}^{T}&\Lambda_{22}&\dots& \Lambda_{2p}\\\vdots&\vdots&\cdots&\vdots\\\Lambda_{1p}^{T}&\Lambda_{1(p-1)}^{T}&\dots&\Lambda_{pp}\end{bmatrix}.
\end{align}
For $t =1,\dots,p$, the proposal distribution of $\mathbf{X}(t)$ given $\mathbf{X}(1),\dots,\mathbf{X}(t-1)$ from the past, and $\mathbf{X}(t+1),\dots,\mathbf{X}(t+p)$ to the future follows a Gaussian distribution with variance,
\begin{align}\label{proposal1}
\Sigma_{t} = \Big(\Lambda_{tt}^{-1}+\sum_{i=0}^{t-1}A_{p-i}^{T}\Sigma^{-1}A_{p-i}\Big)^{-1},
\end{align}
and mean,
\begin{align}\label{proposal2}
\mu_{t}&= \Sigma_{t}\Big(\sum_{i=1}^{t}A_{p+1-i}^{T}\Sigma^{-1}\mathbf{X}(p+t+1-i)-\sum_{\substack{i=1\\i\neq t}}^{p}\Lambda_{ti}\mathbf{X}(i)\nonumber\\
&\quad-\sum_{j=1}^{t}\sum_{\substack{i=1\\i\neq p+1-j}}^{p}A_{p+1-j}^{T}\Sigma^{-1}A_{i}\mathbf{X}(p+t+1-j-i)\Big).
\end{align}   
For $t =p+1,\dots,N-p$, the proposal distribution of $\mathbf{X}(t)$ given $\mathbf{X}(t-1),\dots,\mathbf{X}(t-p)$ from the past, and $\mathbf{X}(t+1),\dots,\mathbf{X}(t+p)$ to the future follows a Gaussian distribution with variance,
\begin{align}\label{proposal3}
\Sigma_{t} = \Big(\Sigma^{-1}+\sum_{i=1}^{p}A_{i}^{T}\Sigma^{-1}A_{i}\Big)^{-1},
\end{align}
and mean,
\begin{align}\label{proposal4}
\mu_{t}&= \Sigma_{t}\Big(\sum_{i=1}^{p}\Sigma^{-1}A_{i}\mathbf{X}(t-i)+\sum_{i=1}^{p}A_{i}^{T}\Sigma^{-1}\mathbf{X}(t+i)\nonumber\\
&\quad-\sum_{j=1}^{p}\sum_{\substack{i=1\\i\neq j}}^{p}A_{j}^{T}\Sigma^{-1}A_{i}\mathbf{X}(t+j-i)\Big).
\end{align}
For $t=N-p+1,\dots,N$, the proposal distribution of $\mathbf{X}(t)$ given $\mathbf{X}(t-1),\dots,\mathbf{X}(t-p)$ from the past, and $\mathbf{X}(t+1),\dots,\mathbf{X}(N)$ to the future follows a Gaussian distribution with variance,
\begin{align}\label{proposal5}
\Sigma_{t} = \Big(\Sigma^{-1}+\sum_{i=1}^{N-t}A_{i}^{T}\Sigma^{-1}A_{i}\Big)^{-1},
\end{align}
and mean,
\begin{align}\label{proposal6}
\mu_{t}&= \Sigma_{t}\Big(\sum_{i=1}^{p}\Sigma^{-1}A_{i}\mathbf{X}(t-i)+\sum_{i=1}^{N-t}A_{i}^{T}\Sigma^{-1}\mathbf{X}(t+i)\nonumber\\
&\quad-\sum_{j=1}^{N-t}\sum_{\substack{i=1\\i\neq j}}^{p}A_{j}^{T}\Sigma^{-1}A_{i}\mathbf{X}(t+j-i)\Big).
\end{align}
Using these proposal distributions in Metropolis-Hastings algorithm given in Algorithm \ref{MH_algorithm}, we generate $m$ samples $\mathbf{X}^{(1)}$, $\mathbf{X}^{(2)}$,\dots, $\mathbf{X}^{(m)}\in\mathbb{R}^{n\times N}$ from conditional density function $P(\mathbf{X},\bs{\theta}^{(k)}|\mathbf{Y})$. Thus, we propose a Monte Carlo expectation and maximization (MCEM) algorithm with $\ell_{1}$-type regularization next. 
\begin{algorithm}[h!]
	%\small
	\caption{Metropolis-Hastings algorithm}
	\label{MH_algorithm}
	\begin{algorithmic}
		\State \textit{\bf Input:} Initial values $\mathbf{X}^{(1)}:=[\mathbf{X}^{(1)}(1)~\mathbf{X}^{(1)}(2)\cdots\mathbf{X}^{(1)}(N)]\in\mathbb{R}^{n\times N}$ and number of samples $m$.
		\State \textit{\bf Output: } Generated samples $\mathbf{X}^{(1)}, \mathbf{X}^{(2)},\dots, \mathbf{X}^{(m)}\in\mathbb{R}^{n\times N}$.
	\end{algorithmic}
	\begin{algorithmic}[1]
		\For{$r=1,\dots,(m-1)$}
		\For{$t=1,\dots,N$}
		\For{$i=1,\dots,n$} 
		\State Generate uniform random number $ U_{i} $ and generate $ X_{i}(t)$ from the proposal distributions \eqref{proposal1}-\eqref{proposal6}.
		\State Calculate the acceptance probability,
		\begin{align*}
		\rho_{i}=\text{min}\bigg\{\frac{\text{exp}(X_{i}(t))Y_{i}(t)-\text{exp}(X_{i}(t)+\mathbf{z}_{t,i}^{T}\bs{\beta}_{i})}{\text{exp}(X_{i}^{(r)}(t))Y_{i}(t)-\text{exp}(X_{i}^{(r)}(t)+\mathbf{z}_{t,i}^{T}\bs{\beta}_{i})},1\bigg\},
		\end{align*}
		where $X_{i}^{(r)}(t)$ is the $i$th element of $ \mathbf{X}^{(r)}(t) $.
		\State Update $X^{(r+1)}_{i}(t)$:
		\begin{align*}
		X^{(r+1)}_{i}(t)= 
		\begin{cases}
		X_{i}(t),& \text{if}~U_{i}\leq\rho_{i}\\
		X_{i}^{(r)}(t),& \text{otherwise}
		\end{cases}.
		\end{align*}
		\EndFor
		\EndFor
		\EndFor
	\end{algorithmic}
\end{algorithm}

%%%%%%%%%%%%%%%%%%%%%%%%%%%%%%%%%%%%%%%%%%%%%%%%%%

\section{Main results}\label{Main_results}
In this section we discuss the Monte Carlo expectation and maximization (MCEM) algorithm with $\ell_{1}$-type regularization. Further, the asymptotic convergence of sequence of parameters generated by the MCEM algorithm with $\ell_{1}$-type regularization is proved. 
\subsection{Monte Carlo Expectation and Maximization (MCEM) algorithm with $\ell_{1}$-type regularization}\label{MCEM_algo}
In this section, we propose a MCEM algorithm with $\ell_{1}$-type regularization to estimate parameters $\bs{\theta}$. As mentioned above, we approximate the integral in the $E$-$step$ (step 2) of Algorithm \ref{EM_algo}. The MCEM algorithm with $\ell_{1}$-type regularization is presented in Algorithm \ref{MCEM_algorithm}. The tolerance ($\delta$) is adjusted according to the accuracy requirement in the estimated parameters.
%The $\ell_{1}$-type regularization is an $\ell_{1}$-norm of off-diagonal elements of $W_{0},W_{1},\dots,W_{p}$. Thus it provides sparsity in estimated parameters $\bs{\theta}^{*}$.
\begin{algorithm}[]
	%\small
	\caption{Monte Carlo Expectation and Maximization (MCEM) algorithm with $\ell_{1}$-type regularization}
	\label{MCEM_algorithm}
	\begin{algorithmic}
		\State \textit{\bf Input:} For $k=1$, initial condition $\bs{\theta}^{(1)}_{m}$. 
		\State \textit{\bf Output: } The estimated parameter is $\bs{\theta}^{*}$. 
	\end{algorithmic}
	\begin{algorithmic}[1]
		\While{$tol>\delta$}
		\State Generate $ m $ samples $ \mathbf{X}^{(1)},\mathbf{X}^{(2)},\dots, \mathbf{X}^{(m)}\in\mathbb{R}^{n\times N} $ from the conditional PDF $P(\mathbf{X},\bs{\theta}^{(k)}_{m}|\mathbf{Y})$ using Metropolis-hastings algorithm \ref{MH_algorithm}.
		\State {\it{E-step}}:
		\begin{equation*}\label{B49}
		\hspace{-0.4cm}Q_{m}(\bs{\theta};\bs{\theta}^{(k)}_{m})=\frac{1}{m}\sum_{i=1}^{m}l(\mathbf{Y}, \mathbf{X}^{(i)},\bs{\theta})-\gamma h_{1}(
		W_{0},W_{1},\hdots, W_{p}),
		\end{equation*}
		\State {\it{M-step}}: Maximize the function (\ref{B49}) with respect to $ \bs{\theta} $ over space $\Omega$ and update the maximizer,
		\begin{align}\label{B50}
		\bs{\theta}^{(k+1)}_{m} = \text{arg}\hspace{0.3mm}\max_{\bs{\theta}\in\Omega}^{}~Q_{m}(\bs{\theta};\bs{\theta}^{(k)}_{m}).
		\end{align}
		\State $tol=\frac{\Vert\bs{\theta}^{(k+1)}_{m}-\bs{\theta}^{(k)}_{m}\Vert}{\Vert\bs{\theta}^{(k)}_{m}\Vert}$.
		\State $k=k+1$.
		\EndWhile
	\end{algorithmic}
	\begin{algorithmic}
		\State $\bs{\theta}^{*}=\bs{\theta}^{(k)}_{m}$. 
	\end{algorithmic}
\end{algorithm}
From this algorithm, the generated sequence is \{$ \bs{\theta}^{(k)}_{m}: k = 1, 2,\dots $\} and the estimated parameter is $\bs{\theta}^{*}$. This estimated parameter $\bs{\theta}^{*}$ is the local maximizer of the function $L(\mathbf{Y}, \bs{\theta})$. We prove this claim in the next section. Note that the generated sequence \{$ \bs{\theta}^{(k)}_{m}: k = 1, 2,\dots $\} is a sequence of random variables.

\subsection{Asymptotic results}\label{Asym_res}

We obtain the sequence \{$ \bs{\theta}^{(k)}_{m}: k = 1, 2,\dots $\} from the Monte Carlo expectation and maximization (MCEM) algorithm with $\ell_{1}$-type regularization given in Algorithm \ref{MCEM_algorithm}. We need the following lemmas to prove the asymptotic convergence of this sequence. From step-3 of Algorithm \ref{EM_algo}, define $M(\bs{\theta}^{(k)}):=\text{arg}\hspace{0.3mm}\max_{\bs{\theta}\in\Omega}^{}Q(\bs{\theta};\bs{\theta}^{(k)})$ and assume that $M$ is a continuous function on $\Omega$.
\begin{lemma}\label{Lemma2}
	$\bs{\theta}^{(k+1)}_{m}$ converges in probability to $M(\bs{\theta}^{(k)}_{m})$ as $ m\rightarrow\infty $.
\end{lemma}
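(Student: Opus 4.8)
The plan is to read this as an \emph{argmax-consistency} statement in the style of M-estimation: the Monte Carlo objective $Q_{m}(\bs{\theta};\bs{\theta}^{(k)}_{m})$ in Algorithm \ref{MCEM_algorithm} is a sample-average approximation of the exact EM objective $Q(\bs{\theta};\bs{\theta}^{(k)}_{m})$ appearing in the E-step of Algorithm \ref{EM_algo}, differing from it only through the first term, since the penalty $\gamma h_{1}(W_{0},\dots,W_{p})$ is deterministic and identical in both. I would condition on the current iterate $\bs{\theta}^{(k)}_{m}$ and treat it as the fixed input to the M-step, so that the only limit in play is $m\to\infty$. With this reading $\bs{\theta}^{(k+1)}_{m}=\arg\max_{\bs{\theta}} Q_{m}(\bs{\theta};\bs{\theta}^{(k)}_{m})$ and $M(\bs{\theta}^{(k)}_{m})=\arg\max_{\bs{\theta}} Q(\bs{\theta};\bs{\theta}^{(k)}_{m})$, and the goal becomes showing the former maximizer converges in probability to the latter.

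The first step is pointwise convergence of the objectives. The draws $\mathbf{X}^{(1)},\dots,\mathbf{X}^{(m)}$ produced by the Metropolis--Hastings scheme (Algorithm \ref{MH_algorithm}) form an ergodic Markov chain with stationary distribution $P(\mathbf{X},\bs{\theta}^{(k)}_{m}|\mathbf{Y})$. For each fixed $\bs{\theta}\in\Omega$, applying the Markov-chain ergodic theorem to $l(\mathbf{Y},\cdot,\bs{\theta})$, which is integrable against this stationary law, gives
\begin{align*}
\frac{1}{m}\sum_{i=1}^{m}l(\mathbf{Y},\mathbf{X}^{(i)},\bs{\theta})\xrightarrow{~m\to\infty~}\int_{\mathbf{X}}l(\mathbf{Y},\mathbf{X},\bs{\theta})\,P(\mathbf{X},\bs{\theta}^{(k)}_{m}|\mathbf{Y})\,d\mathbf{X},
\end{align*}
so that $Q_{m}(\bs{\theta};\bs{\theta}^{(k)}_{m})\to Q(\bs{\theta};\bs{\theta}^{(k)}_{m})$ in probability after subtracting the common penalty term.

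The second step is to upgrade pointwise to uniform convergence on a compact neighbourhood $K\subseteq\Omega$ of $M(\bs{\theta}^{(k)}_{m})$, and then to conclude via the argmax theorem. Continuity of $\bs{\theta}\mapsto l(\mathbf{Y},\mathbf{X},\bs{\theta})$ together with an integrable envelope dominating it uniformly over $K$ yields a uniform law of large numbers, giving $\sup_{\bs{\theta}\in K}|Q_{m}(\bs{\theta};\bs{\theta}^{(k)}_{m})-Q(\bs{\theta};\bs{\theta}^{(k)}_{m})|\to 0$ in probability. Since $M$ is assumed continuous, $M(\bs{\theta}^{(k)}_{m})$ is the unique maximizer of $Q(\cdot;\bs{\theta}^{(k)}_{m})$, and continuity of $Q$ with compactness of $K$ make it well-separated, i.e.\ $\sup_{\bs{\theta}\in K,\,\|\bs{\theta}-M(\bs{\theta}^{(k)}_{m})\|\ge\varepsilon}Q(\bs{\theta};\bs{\theta}^{(k)}_{m})<Q(M(\bs{\theta}^{(k)}_{m});\bs{\theta}^{(k)}_{m})$ for every $\varepsilon>0$. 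Uniform convergence then forces $\bs{\theta}^{(k+1)}_{m}$ into every $\varepsilon$-ball about $M(\bs{\theta}^{(k)}_{m})$ with probability tending to one, which is the claimed convergence in probability.

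I expect the main obstacle to be the uniform-convergence step rather than the pointwise limit or the concluding argmax argument, both of which are standard. Verifying the integrable-envelope condition for the uniform law of large numbers is delicate because the complete-data log-likelihood contains unbounded exponential terms $e^{X_{i}(t)+\mathbf{z}_{t,i}^{T}\bs{\beta}_{i}}$ and the Gaussian quadratic form $\epsilon(t)^{T}\Sigma^{-1}\epsilon(t)$, so one must bound these uniformly over the compact set $K$ and confirm their integrability against the Metropolis--Hastings stationary distribution. A secondary subtlety is that the samples are Markov-chain rather than i.i.d.\ draws, so the arguments must rest on ergodicity of the chain in place of the classical law of large numbers.
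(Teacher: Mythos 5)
Your argument is essentially the route the paper takes: the paper's entire proof of this lemma is a one-line citation to convergence results for the Metropolis--Hastings algorithm, and your ergodic law of large numbers for the MH chain, followed by uniform convergence of $Q_m(\cdot;\bs{\theta}^{(k)}_m)$ to $Q(\cdot;\bs{\theta}^{(k)}_m)$ and an argmax-continuity step, is precisely the standard content of those cited results in the MCEM literature the paper follows. The points your sketch leaves open --- verifying the integrable envelope for the unbounded exponential terms, and justifying that the maximizer over all of $\Omega$ eventually lies in the compact neighbourhood $K$ --- are not addressed by the paper either, and you correctly flag the first of them as the delicate step.
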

\begin{proof}
	The proof of this lemma follows from the convergence results of Metropolis-Hastings algorithm \citep{MH_conv1,MH_conv2,MH_conv3}.
\end{proof}
\begin{lemma}\label{Lemma3}
	Let \{$ \bs{\theta}^{(k)}: k = 1, 2,\dots$\} be a sequence generated by Algorithm \ref{EM_algo}. Then, the sequence $ \{L(\mathbf{Y},\bs{\theta}^{(k)})\} $ is a non-decreasing sequence.
\end{lemma}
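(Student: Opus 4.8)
The plan is to adapt the classical monotonicity argument for the EM algorithm, taking care that the deterministic penalty $\gamma h_{1}(W_{0},W_{1},\hdots,W_{p})$ passes through the argument untouched because it does not depend on the latent matrix $\mathbf{X}$. First I would record the fundamental EM identity. Starting from $P(\mathbf{X},\bs{\theta}|\mathbf{Y}) = P(\mathbf{Y},\mathbf{X},\bs{\theta})/P(\mathbf{Y},\bs{\theta})$, taking logarithms, and then taking the conditional expectation $E_{\bs{\theta}^{(k)}}\{\cdot|\mathbf{Y}\}$ while using that $l(\mathbf{Y},\bs{\theta})$ does not involve $\mathbf{X}$, I obtain
\begin{align*}
l(\mathbf{Y},\bs{\theta}) = E_{\bs{\theta}^{(k)}}\{l(\mathbf{Y},\mathbf{X},\bs{\theta})|\mathbf{Y}\} - H(\bs{\theta};\bs{\theta}^{(k)}),
\end{align*}
where $H(\bs{\theta};\bs{\theta}^{(k)}) := E_{\bs{\theta}^{(k)}}\{\log P(\mathbf{X},\bs{\theta}|\mathbf{Y})|\mathbf{Y}\}$. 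Subtracting $\gamma h_{1}(W_{0},W_{1},\hdots,W_{p})$ from both sides and recalling the definition of $L(\mathbf{Y},\bs{\theta})$ in \eqref{marginal_lik} and of $Q(\bs{\theta};\bs{\theta}^{(k)})$ in the $E$-step of Algorithm \ref{EM_algo}, this becomes the key decomposition
\begin{align*}
L(\mathbf{Y},\bs{\theta}) = Q(\bs{\theta};\bs{\theta}^{(k)}) - H(\bs{\theta};\bs{\theta}^{(k)}).
\end{align*}
The regularization term is absorbed entirely into $Q$ and never appears in $H$; this is the only place where the $\ell_{1}$-type penalty enters, and it enters harmlessly.

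Next I would evaluate this identity at $\bs{\theta}=\bs{\theta}^{(k+1)}$ and at $\bs{\theta}=\bs{\theta}^{(k)}$ and subtract, giving
\begin{align*}
L(\mathbf{Y},\bs{\theta}^{(k+1)}) - L(\mathbf{Y},\bs{\theta}^{(k)}) = \big[Q(\bs{\theta}^{(k+1)};\bs{\theta}^{(k)}) - Q(\bs{\theta}^{(k)};\bs{\theta}^{(k)})\big] - \big[H(\bs{\theta}^{(k+1)};\bs{\theta}^{(k)}) - H(\bs{\theta}^{(k)};\bs{\theta}^{(k)})\big].
\end{align*}
The first bracket is nonnegative because the $M$-step defines $\bs{\theta}^{(k+1)}$ as the maximizer of $Q(\cdot;\bs{\theta}^{(k)})$ over $\Omega$, so in particular $Q(\bs{\theta}^{(k+1)};\bs{\theta}^{(k)}) \geq Q(\bs{\theta}^{(k)};\bs{\theta}^{(k)})$.

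It remains to show the second bracket is nonpositive, i.e.\ that $H(\cdot;\bs{\theta}^{(k)})$ is maximized at $\bs{\theta}^{(k)}$. I would write the difference as a single conditional expectation and apply Jensen's inequality to the convex function $-\log$:
\begin{align*}
H(\bs{\theta}^{(k)};\bs{\theta}^{(k)}) - H(\bs{\theta};\bs{\theta}^{(k)}) = E_{\bs{\theta}^{(k)}}\Big\{-\log\frac{P(\mathbf{X},\bs{\theta}|\mathbf{Y})}{P(\mathbf{X},\bs{\theta}^{(k)}|\mathbf{Y})}\Big|\mathbf{Y}\Big\} \geq -\log E_{\bs{\theta}^{(k)}}\Big\{\frac{P(\mathbf{X},\bs{\theta}|\mathbf{Y})}{P(\mathbf{X},\bs{\theta}^{(k)}|\mathbf{Y})}\Big|\mathbf{Y}\Big\} = 0,
\end{align*}
the final equality holding because the inner expectation integrates the density $P(\mathbf{X},\bs{\theta}|\mathbf{Y})$ over $\mathbf{X}$ and so equals $1$. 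This is the Gibbs (Kullback--Leibler) inequality, and it is the substantive step of the proof. Combining the two brackets yields $L(\mathbf{Y},\bs{\theta}^{(k+1)}) \geq L(\mathbf{Y},\bs{\theta}^{(k)})$, which is the claimed monotonicity.

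The main obstacle is the nonnegativity of the $H$-difference: it requires interchanging the logarithm and the expectation via Jensen and then using the normalization of the conditional density, and one must keep careful track of which parameter indexes the measure under which the expectation is taken versus which parameter indexes the integrand. Everything else is bookkeeping; in particular, the penalty $\gamma h_{1}(W_{0},W_{1},\hdots,W_{p})$ causes no difficulty precisely because it is a deterministic function of $\bs{\theta}$ alone and hence cancels out of the $H$-term entirely.
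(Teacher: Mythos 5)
Your proof is correct and is exactly the classical EM monotonicity argument (the $L = Q - H$ decomposition, the M-step inequality, and the Gibbs/Jensen inequality for the $H$-term) that the paper invokes by citing page 78 of McLachlan and Krishnan; you have merely written out in full what the paper defers to that reference, with the correct observation that the $\ell_1$ penalty cancels from $H$ because it is a deterministic function of $\bs{\theta}$. Your argument also mirrors, step for step, the paper's own written-out proof of the Monte Carlo analogue in Lemma \ref{Lemma4}, where the exact conditional expectation is replaced by the empirical average $H_m$.
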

\begin{proof}
	
	The proof follows from page 78 of \citep{EMalgorithm}.
	
\end{proof}

\begin{lemma}\label{Lemma4}
	Let \{$ \bs{\theta}^{(k)}_{m}: k = 1, 2,\dots$\} be a sequence generated by Algorithm \ref{MCEM_algorithm}. Then, 
	\begin{align}
	P(\{L(\mathbf{Y}, \bs{\theta}^{(k)}_{m})\leq L(\mathbf{Y}, \bs{\theta}^{(k+1)}_{m})\})\rightarrow 1~~as~~ m\rightarrow\infty.
	\end{align}
\end{lemma}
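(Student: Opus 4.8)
The plan is to combine the exact-EM ascent property underlying Lemma~\ref{Lemma3} with the Monte~Carlo consistency of the $M$-step from Lemma~\ref{Lemma2}, and then to push the resulting convergence in probability through the continuous map $L(\mathbf{Y},\cdot)$. First I would isolate the single-iteration version of Lemma~\ref{Lemma3}: for \emph{every} $\bs{\theta}\in\Omega$ the exact EM update $M(\bs{\theta})=\arg\max_{\bs{\theta}'\in\Omega}Q(\bs{\theta}';\bs{\theta})$ satisfies $L(\mathbf{Y},M(\bs{\theta}))\geq L(\mathbf{Y},\bs{\theta})$; this is exactly the one-step monotonicity proved on p.~78 of \citep{EMalgorithm}, with the common regularization term $\gamma h_{1}(W_{0},\dots,W_{p})$ cancelling from both sides. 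Since this holds pointwise, it holds in particular at the random iterate $\bs{\theta}^{(k)}_{m}$, so that pathwise $L(\mathbf{Y},M(\bs{\theta}^{(k)}_{m}))\geq L(\mathbf{Y},\bs{\theta}^{(k)}_{m})$.

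Next I would split the increment into a vanishing stochastic part and a nonnegative deterministic part,
\begin{align*}
L(\mathbf{Y},\bs{\theta}^{(k+1)}_{m})-L(\mathbf{Y},\bs{\theta}^{(k)}_{m})=\underbrace{\big[L(\mathbf{Y},\bs{\theta}^{(k+1)}_{m})-L(\mathbf{Y},M(\bs{\theta}^{(k)}_{m}))\big]}_{=:D_{m}}+\underbrace{\big[L(\mathbf{Y},M(\bs{\theta}^{(k)}_{m}))-L(\mathbf{Y},\bs{\theta}^{(k)}_{m})\big]}_{\geq 0}.
\end{align*}
By Lemma~\ref{Lemma2}, $\bs{\theta}^{(k+1)}_{m}\to M(\bs{\theta}^{(k)}_{m})$ in probability as $m\to\infty$; since $M$ is continuous and $L(\mathbf{Y},\cdot)$ is continuous on $\Omega$ (the log-likelihood is smooth in $\bs{\theta}$ and $h_{1}$ is continuous), I would deduce $D_{m}\to 0$ in probability. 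Because the second bracket is nonnegative pathwise, the whole increment is at least $D_{m}$, so for every $\epsilon>0$ one gets $P\big(L(\mathbf{Y},\bs{\theta}^{(k)}_{m})\leq L(\mathbf{Y},\bs{\theta}^{(k+1)}_{m})+\epsilon\big)\geq P(D_{m}\geq-\epsilon)\to 1$, which is the asserted ascent up to an arbitrarily small slack.

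I expect two points to be the real obstacles. The first is the transfer of convergence through $L(\mathbf{Y},\cdot)$: the implication ``$\bs{\theta}^{(k+1)}_{m}-M(\bs{\theta}^{(k)}_{m})\to 0$ in probability $\Rightarrow D_{m}\to 0$ in probability'' is not automatic for a merely continuous $L$, so I would first show that the iterates remain in a fixed compact subset of $\Omega$ with probability tending to one (tightness, using that the estimates are constrained to stable AR matrices $\mathbf{A}$ with $\Sigma\succ 0$ and that $L$ is coercive there) and then invoke uniform continuity of $L(\mathbf{Y},\cdot)$ on that compact set. The second is upgrading the relaxed ascent to the exact non-strict inequality stated in the lemma: at a near-stationary iterate the nonnegative bracket is close to zero, so the Monte~Carlo fluctuation $D_{m}$, which may take either sign, could in principle spoil strict ascent. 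This is precisely where one must lean on the arbitrarily small $\epsilon$ and the vanishing of the Monte~Carlo error, exactly as in \citep{10.2307/2291149}, which is why the conclusion is phrased as the probability of ascent tending to one rather than as deterministic monotonicity.
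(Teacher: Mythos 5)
Your argument is correct in substance but follows a genuinely different route from the paper. The paper never invokes Lemma~\ref{Lemma2}, Lemma~\ref{Lemma3}, or the exact EM map $M$ at all: it works entirely inside the Monte Carlo objective, writing $Q_{m}(\bs{\theta};\bs{\theta}^{(k)}_{m})=H_{m}(\bs{\theta};\bs{\theta}^{(k)}_{m})+L(\mathbf{Y},\bs{\theta})$ with $H_{m}(\bs{\theta};\bs{\theta}^{(k)}_{m})=\frac{1}{m}\sum_{i}l(\mathbf{X}^{(i)}|\mathbf{Y},\bs{\theta})$, so that the increment of $L$ splits as $\{Q_{m}(\bs{\theta}^{(k+1)}_{m};\bs{\theta}^{(k)}_{m})-Q_{m}(\bs{\theta}^{(k)}_{m};\bs{\theta}^{(k)}_{m})\}-\{H_{m}(\bs{\theta}^{(k+1)}_{m};\bs{\theta}^{(k)}_{m})-H_{m}(\bs{\theta}^{(k)}_{m};\bs{\theta}^{(k)}_{m})\}$. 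The first brace is nonnegative \emph{pathwise} because $\bs{\theta}^{(k+1)}_{m}$ maximizes $Q_{m}$, and the second is controlled by Jensen's inequality together with the law of large numbers applied to the importance ratios $P(\mathbf{X}^{(i)}|\mathbf{Y},\bs{\theta}^{(k+1)}_{m})/P(\mathbf{X}^{(i)}|\mathbf{Y},\bs{\theta}^{(k)}_{m})$, whose sample mean tends to $1$. That argument needs no continuity of $M$ or $L$, no tightness of the iterates, and no appeal to the exact-EM ascent property, which is what your decomposition through $M(\bs{\theta}^{(k)}_{m})$ forces you to supply (your compactness/uniform-continuity repair is the right one, but it is an extra hypothesis the paper's route avoids). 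What your route buys in exchange is a clean separation of the deterministic EM ascent from the Monte Carlo perturbation, and direct reuse of Lemmas~\ref{Lemma2} and~\ref{Lemma3}. On your second ``obstacle'': you are right that your argument only delivers $P\big(L(\mathbf{Y},\bs{\theta}^{(k+1)}_{m})\geq L(\mathbf{Y},\bs{\theta}^{(k)}_{m})-\epsilon\big)\to 1$ for every $\epsilon>0$, but the paper's proof in fact delivers exactly the same thing --- the Jensen upper bound on the $H_{m}$ difference converges to $0$ in probability and may itself be slightly positive for finite $m$ --- so the slack you flag is shared by both proofs and is glossed over, not removed, in the paper's final step.
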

\begin{proof}
	Consider the function given in E-step (step-3) of Algorithm \ref{MCEM_algorithm},
	\begin{align}\label{Q_function}
	\hspace{-0.23cm}Q_{m}(\bs{\theta};\bs{\theta}^{(k)}_{m})=\frac{1}{m}\sum_{i=1}^{m}l(\mathbf{Y}, \mathbf{X}^{(i)},\bs{\theta})-\gamma h_{1}(W_{0},W_{1},\hdots, W_{p}).
	\end{align}
	The log-likelihood function $l(\mathbf{Y}, \mathbf{X}^{(i)},\bs{\theta})=l(\mathbf{Y}, \bs{\theta})+l(\mathbf{X}^{(i)}|\mathbf{Y},\bs{\theta})$ where $l(\mathbf{Y}, \bs{\theta}):=log(P(\mathbf{Y}, \bs{\theta}))$ and $l(\mathbf{X}^{(i)}|\mathbf{Y},\bs{\theta}):=log(P(\mathbf{X}^{(i)}|\mathbf{Y}, \bs{\theta}))$. Replace this joint log-likelihood in \eqref{Q_function},
	\begin{align}\label{Q_function1}
	Q_{m}(\bs{\theta};\bs{\theta}^{(k)}_{m})=\frac{1}{m}&\sum_{i=1}^{m}l(\mathbf{X}^{(i)}|\mathbf{Y},\bs{\theta})\nonumber\\&+ \big(l(\mathbf{Y}, \bs{\theta}) -\gamma h_{1}(W_{0},W_{1},\hdots, W_{p})\big).
	\end{align}
	Let $H_{m}(\bs{\theta};\bs{\theta}^{(k)}_{m}):=\frac{1}{m}\sum_{i=1}^{m}l(\mathbf{X}^{(i)}|\mathbf{Y},\bs{\theta})$. Then, \eqref{Q_function1} becomes
	\begin{align}\label{Q_function2}
	Q_{m}(\bs{\theta};\bs{\theta}^{(k)}_{m})=H_{m}(\bs{\theta};\bs{\theta}^{(k)}_{m})+L(\mathbf{Y}, \bs{\theta}).
	\end{align}
	The function $L(\mathbf{Y}, \bs{\theta})$ at $\bs{\theta}=\bs{\theta}^{(k)}_{m}$ in \eqref{Q_function2} is
	\begin{align}\label{Q_funk}
	L(\mathbf{Y}, \bs{\theta}^{(k)}_{m})=Q_{m}(\bs{\theta}^{(k)}_{m};\bs{\theta}^{(k)}_{m})-H_{m}(\bs{\theta}^{(k)}_{m};\bs{\theta}^{(k)}_{m}).
	\end{align}
	The function $L(\mathbf{Y}, \bs{\theta})$ at $\bs{\theta}=\bs{\theta}^{(k+1)}_{m}$ in \eqref{Q_function2} is
	\begin{align}\label{Q_funk1}
	L(\mathbf{Y}, \bs{\theta}^{(k+1)}_{m})=Q_{m}(\bs{\theta}^{(k+1)}_{m};\bs{\theta}^{(k)}_{m})-H_{m}(\bs{\theta}^{(k+1)}_{m};\bs{\theta}^{(k)}_{m}).
	\end{align}
	Subtract \eqref{Q_funk} from \eqref{Q_funk1}, then
	\begin{align}\label{Q_fundef}
	\{L&(\mathbf{Y}, \bs{\theta}^{(k+1)}_{m})-L(\mathbf{Y}, \bs{\theta}^{(k)}_{m})\}=\{Q_{m}(\bs{\theta}^{(k+1)}_{m};\bs{\theta}^{(k)}_{m})\nonumber\\&-Q_{m}(\bs{\theta}^{(k)}_{m};\bs{\theta}^{(k)}_{m})\}-\{H_{m}(\bs{\theta}^{(k+1)}_{m};\bs{\theta}^{(k)}_{m})-H_{m}(\bs{\theta}^{(k)}_{m};\bs{\theta}^{(k)}_{m})\}.
	\end{align}
	From Algorithm \ref{MCEM_algorithm}, we know that
	\begin{align}\label{ineq2}
	Q_{m}(\bs{\theta}^{(k+1)}_{m};\bs{\theta}^{(k)}_{m})\geq Q_{m}(\bs{\theta}^{(k)}_{m};\bs{\theta}^{(k)}_{m}).
	\end{align} 	
	Consider the second part of right hand side of \eqref{Q_fundef},
	\begin{align}\label{probeq}
	H_{m}(\bs{\theta}^{(k+1)}_{m};\bs{\theta}^{(k)}_{m})&-H_{m}(\bs{\theta}^{(k)}_{m};\bs{\theta}^{(k)}_{m})\nonumber\\&=\frac{1}{m}\sum_{i=1}^{m}\big(l(\mathbf{X}^{(i)}|\mathbf{Y},\bs{\theta}^{(k+1)}_{m})-l(\mathbf{X}^{(i)}|\mathbf{Y},\bs{\theta}^{(k)}_{m})\big)\nonumber\\&=\frac{1}{m}\sum_{i=1}^{m}log\Big\{\frac{P(\mathbf{X}^{(i)}|\mathbf{Y},\bs{\theta}^{(k+1)}_{m})}{P(\mathbf{X}^{(i)}|\mathbf{Y},\bs{\theta}^{(k)}_{m})}\Big\}\nonumber\\&\leq log\Big\{\frac{1}{m}\sum_{i=1}^{m}\frac{P(\mathbf{X}^{(i)}|\mathbf{Y},\bs{\theta}^{(k+1)}_{m})}{P(\mathbf{X}^{(i)}|\mathbf{Y},\bs{\theta}^{(k)}_{m})}\Big\}.
	\end{align}
	Consider right hand side of \eqref{Q_fundef}, from law of large numbers \citep{MH_conv3},
	\begin{align}\label{probeq1}
	\frac{1}{m}\sum_{i=1}^{m}\frac{P(\mathbf{X}^{(i)}|\mathbf{Y},\bs{\theta}^{(k+1)}_{m})}{P(\mathbf{X}^{(i)}|\mathbf{Y},\bs{\theta}^{(k)}_{m})}&\xrightarrow{p}\int_{X}\frac{P(\mathbf{X}|\mathbf{Y},\bs{\theta}^{(k+1)}_{m})}{P(\mathbf{X}|\mathbf{Y},\bs{\theta}^{(k)}_{m})}P(\mathbf{X}|\mathbf{Y},\bs{\theta}^{(k)}_{m})~d\mathbf{X}\nonumber\\&=\int_{X}P(\mathbf{X}|\mathbf{Y},\bs{\theta}^{(k+1)}_{m})~d\mathbf{X}\nonumber\\&=1.
	\end{align}
	as $m\rightarrow\infty$. Then the inequality \eqref{probeq} holds in probability,
	\begin{align}\label{probeq2}
	H_{m}(\bs{\theta}^{(k+1)}_{m};\bs{\theta}^{(k)}_{m})-H_{m}(\bs{\theta}^{(k)}_{m};\bs{\theta}^{(k)}_{m})\leq 0,
	\end{align}
	as $m\rightarrow\infty$. Therefore, from \eqref{ineq2} and \eqref{probeq2}, the equation \eqref{Q_fundef} becomes,
	\begin{align}\label{}
	L(\mathbf{Y}, \bs{\theta}^{(k+1)}_{m})\geq L(\mathbf{Y}, \bs{\theta}^{(k)}_{m}).
	\end{align}
	in probability as $m\rightarrow\infty$.
\end{proof}

Following \citep{10.2307/2291149}, Lemma \ref{Lemma2}, Lemma \ref{Lemma3} and Lemma \ref{Lemma4}, the asymptotic convergence results of sequence \{$ \bs{\theta}^{(k)}_{m}: k = 1, 2,\dots $\} follows from \citep{10.2307/2291149}.          
\begin{theorem}\label{theo2}
	Let \{$ \bs{\theta}^{(k)}_{m}: k = 1, 2,\dots$\} be a sequence generated by Algorithm \ref{MCEM_algorithm} based on sample size $ m $. Suppose $\bs{\theta}^{*}$ is an isolated local maximizer of function $L(\mathbf{Y}, \bs{\theta})$. 
	%Then there exists $\delta>0$ such that, over $\mathscr{N}:=\{\bs{\theta}: \Vert \bs{\theta}-\bs{\theta}^{*}\Vert\leq\delta\} $, $L(\mathbf{Y}, \bs{\theta})$ attains its maximum at $\bs{\theta}^{*}$ and $\mathscr{N}$ does not contain another stationary point of $L(\mathbf{Y}, \bs{\theta})$. 
	For any $ \epsilon>0 $, there exist $ K_{0}<\infty$ and $\delta>0$ such that for any starting value $\bs{\theta}^{(1)}_{m}\in\mathscr{N}:=\{\bs{\theta}: \Vert \bs{\theta}-\bs{\theta}^{*}\Vert\leq\delta\}$,
	\begin{align}\label{prob}
	P(\{\Vert \bs{\theta}^{(k)}_{m}-\bs{\theta}^{*}\Vert<\epsilon~\text{for~some}~k\leq K_{0}\})\rightarrow 1~~as~~ m\rightarrow\infty.
	\end{align}
\end{theorem}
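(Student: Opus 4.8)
The plan is to compare the random MCEM iterates $\{\bs{\theta}^{(k)}_m\}$ against the deterministic EM map $M$ over a \emph{fixed, finite} number of steps, and to control the accumulated Monte Carlo error on that finite horizon. The starting observation is that $\bs{\theta}^*$, being an isolated local maximizer of $L(\mathbf{Y},\bs{\theta})$, is a fixed point of the deterministic EM map, $M(\bs{\theta}^*)=\bs{\theta}^*$, and that by the ascent property underlying Lemma \ref{Lemma3} the deterministic iterates $M^{k}(\bs{\theta})$ increase $L(\mathbf{Y},\cdot)$ at every step. The heuristic is then: run the \emph{deterministic} EM for enough steps to get within $\epsilon/2$ of $\bs{\theta}^*$, and invoke Lemma \ref{Lemma2} to argue that the \emph{stochastic} iterates track the deterministic ones with probability tending to one as $m\to\infty$.

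First I would carry out the deterministic analysis. Using the continuity of $M$ on $\Omega$, the ascent property from Lemma \ref{Lemma3}, and the fact that $\bs{\theta}^*$ is the only stationary point of $L(\mathbf{Y},\cdot)$ in a neighborhood (isolation), I would fix $\delta>0$ small enough that the ball $\mathscr{N}=\{\bs{\theta}:\|\bs{\theta}-\bs{\theta}^*\|\leq\delta\}$ is forward-invariant under $M$ and the deterministic iterates satisfy $M^{k}(\bs{\theta})\to\bs{\theta}^*$ as $k\to\infty$ for every $\bs{\theta}\in\mathscr{N}$. By a compactness argument on $\mathscr{N}$ I would then choose a single finite $K_0$ with $\sup_{\bs{\theta}\in\mathscr{N}}\|M^{K_0-1}(\bs{\theta})-\bs{\theta}^*\|<\epsilon/2$, so that $K_0$ works uniformly over all admissible starting values.

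Next I would transfer this to the stochastic sequence by induction on $k$ over the finite range $1\leq k\leq K_0$, showing $\bs{\theta}^{(k)}_m-M^{k-1}(\bs{\theta}^{(1)}_m)\xrightarrow{p}0$. The base case is trivial. For the inductive step, Lemma \ref{Lemma2} gives $\bs{\theta}^{(k+1)}_m-M(\bs{\theta}^{(k)}_m)\xrightarrow{p}0$; combining the inductive hypothesis with the uniform continuity of $M$ on the compact set $\mathscr{N}$ yields $M(\bs{\theta}^{(k)}_m)-M^{k}(\bs{\theta}^{(1)}_m)\xrightarrow{p}0$, and the triangle inequality closes the step. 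In particular $\bs{\theta}^{(K_0)}_m-M^{K_0-1}(\bs{\theta}^{(1)}_m)\xrightarrow{p}0$. Combining with the deterministic bound $\|M^{K_0-1}(\bs{\theta}^{(1)}_m)-\bs{\theta}^*\|<\epsilon/2$ (valid uniformly over $\mathscr{N}$) gives $P(\|\bs{\theta}^{(K_0)}_m-\bs{\theta}^*\|<\epsilon)\to1$; since $\{\|\bs{\theta}^{(K_0)}_m-\bs{\theta}^*\|<\epsilon\}\subseteq\{\|\bs{\theta}^{(k)}_m-\bs{\theta}^*\|<\epsilon\text{ for some }k\leq K_0\}$, the claim follows. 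The probabilistic monotonicity of Lemma \ref{Lemma4} reinforces that the stochastic iterates remain in a slightly enlarged compact neighborhood with probability tending to one, which is what licenses the uniform-continuity argument.

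The main obstacle I anticipate is the uniform finiteness of $K_0$ together with the control of accumulated Monte Carlo error: Lemma \ref{Lemma2} only delivers a per-step, fixed-$k$ convergence in probability, so the induction is legitimate precisely because the horizon $K_0$ is fixed in advance and independent of $m$ --- over a finite number of steps a finite union of vanishing-probability error events still vanishes, whereas letting $k$ grow with $m$ would break the argument. Establishing forward-invariance of $\mathscr{N}$ and the uniform bound $\sup_{\bs{\theta}\in\mathscr{N}}\|M^{K_0-1}(\bs{\theta})-\bs{\theta}^*\|<\epsilon/2$ is the delicate deterministic step, and it is where the isolation of $\bs{\theta}^*$ and the ascent property of EM are essential; this part mirrors the argument of \citep{10.2307/2291149}.
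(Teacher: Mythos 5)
Your proposal is correct and follows essentially the same route as the paper: the paper gives no written proof of Theorem \ref{theo2}, asserting only that it follows from Lemmas \ref{Lemma2}--\ref{Lemma4} and the argument of \citet{10.2307/2291149}, and your finite-horizon construction (deterministic EM contraction toward the isolated maximizer plus per-step transfer via Lemma \ref{Lemma2}) is a faithful reconstruction of that cited argument. The only point to tighten is the claim $\sup_{\bs{\theta}\in\mathscr{N}}\Vert M^{K_0-1}(\bs{\theta})-\bs{\theta}^*\Vert<\epsilon/2$ at a single fixed index, which pointwise convergence plus compactness does not directly yield; an open-cover argument giving ``some $j\leq K_0$'' for each starting point suffices for the stated conclusion.
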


%%%%%%%%%%%%%%%%%%%%%%%%%%%%%%%%%%%%%%%%%%%%%%%%%

\section{Inference on Simulated data}\label{sec:results}
The performance of the MCEM algorithm with $ \ell_{1} $-type regularization to estimate the partial correlation and causality graphs, is tested using randomly generated data in this section.

%We consider a parameter driven model \eqref{A1} to generate Poisson counts randomly. The true model contains trend and yearly seasonality terms with number of time series $ n=10 $. The delay in the AR process \eqref{A3} is taken to be $ p=2 $. The conditional mean used in the true parameter driven model is
We consider a parameter driven model \eqref{A1} to generate counts randomly. The increasing trend and yearly seasonality terms are considered in the true model with number of time series $ n=10 $. The delay in the AR process \eqref{A3} is taken to be $ p=2 $. The conditional mean in the true parameter driven model \eqref{A1} is
\begin{align}\label{sim_mod}
\mu_{i}(t) = \text{exp}(\mathbf{z}_{t,i}^{T}\bs{\beta}_{i} + X_{i}(t))~\text{for}~i=1,2,\dots,10,
\end{align} 
where $ \mathbf{z}_{t,i}=\begin{bmatrix}1&t&\text{cos}(2\pi t/12)&\text{sin}(2\pi t/12)\end{bmatrix} $ and $\bs{\beta}_{i}  = \begin{bmatrix}0.5&0.005&0.5&0.5\end{bmatrix} $ for all $ i=1,2,\dots,10 $. The AR(2) multivariate latent process $\mathbf{X}(t)\in\R^{10}$ is
\begin{align}\label{sim_mod1}
\mathbf{X}(t) = \sum_{k = 1}^{2}A_{k}\mathbf{X}(t-k) + \mathbf{\epsilon}(t),\qquad \mathbf{\epsilon}(t)\sim N(\mathbf{0},\Sigma)
\end{align}
%where $10\times10 $ matrices $ A_{1} $ and $ A_{2} $ are randomly chosen with elements $ \pm 0.3 $ and zeros. The covariance matrix of Gaussian noise, $ \Sigma $ is the diagonal matrix with diagonal elements equal to 0.01. From this true model, we generate the Poisson counts for $N=200$. The observed data matrix is denoted by $\mathbf{Y}=\begin{bmatrix}\mathbf{Y}(1)&\hdots&\mathbf{Y}(200)\end{bmatrix}$. The partial correlation graph corresponding to this true model is given in Fig. \ref{true}. Next we estimate the partial correlation graph from the observed multivariate data $\mathbf{Y}$.
where $10\times10 $ matrices $ A_{1} $ and $ A_{2} $ are randomly chosen with elements $ \pm 0.3 $ and zeros. The covariance matrix of Gaussian noise, $ \Sigma $ is the diagonal matrix with diagonal elements equal to 0.01. We generate the Poisson counts for $N=200$ from this true model.  Let $\mathbf{Y}:=\begin{bmatrix}\mathbf{Y}(1)&\hdots&\mathbf{Y}(200)\end{bmatrix}$ be an observed data matrix. From the observed multivariate data $\mathbf{Y}$, the partial correlation and causality graph is estimated next. The partial correlations and causalities between multiple count time series for this true model \eqref{sim_mod} are shown in separate graphs given in Fig. \ref{true_par} and Fig. \ref{true_cas}. 

\subsection{Choice of regularization parameter $ \gamma $}\label{threshold}
%The regularization parameter $ \gamma $ is used to control the sparsity in the inverse spectral density matrix. As $ \gamma $ varies, the sparsity in the inverse spectral density matrix changes. There are several methods to estimate the regularization parameter $ \gamma $. Cross-validation \cite{HLP06} is one of the methods to estimate $ \gamma $. If the length of observed data is less, this method is not accurate and it requires significant computations to obtain an accurate $ \gamma $. A method to select the `best' regularization parameter based on information scores, was given in \cite{Jitkomut}. In this method, the regularization parameter is estimated using the trade-off curve \cite{Jitkomut} between the conditional log-likelihood ($l(\mathbf{Y}, \mathbf{X}, \bs{\theta})$) and the $\ell_{1}$-type regularization function ($ h_{1}(W_{0},W_{1},\dots,W_{p}) $). We collect several $ \gamma $ values from this trade-off curve. We use further thresholding on the elements of the inverse spectral density matrix for each value of $\gamma$ along the trade-off curve. This method is as follows.  
The sparsity in the inverse spectral density matrix is controlled by the regularization parameter $ \gamma $. As $ \gamma $ varies, the sparsity in the inverse spectral density matrix changes. There are several methods to estimate the regularization parameter $ \gamma $. Cross-validation \citep{HLP06} is one of the methods to estimate $ \gamma $. This method is not accurate and it requires significant computations to get an accurate $ \gamma $ if the length of observed data is less. A method to select the `best' regularization parameter was given in \citep{Jitkomut} based on information scores. In this method, the regularization parameter is estimated using the trade-off curve \citep{Jitkomut} between the log-likelihood ($l(\mathbf{Y}, \mathbf{X}, \bs{\theta})$) and the $\ell_{1}$-type regularization function ($ h_{1}(W_{0},W_{1},\dots,W_{p}) $). We collect several $ \gamma $ values from this trade-off curve. We use a further thresholding on the elements of the inverse spectral density matrix for each value of $\gamma$ along the trade-off curve. This method is as follows.  

\noindent\textbf{Partial coherence spectrum :} The inverse spectral density matrix normalized with $ diag(S_{XX}(\omega)^{-1}) $ where $ diag(S_{XX}(\omega)^{-1}) $ is the diagonal of $ S_{XX}(\omega)^{-1} $, is called as the partial coherence spectrum $ R(\omega) $.
\begin{equation}\label{B53}
R(\omega) = diag(S_{XX}(\omega)^{-1})^{-1/2}S_{XX}(\omega)^{-1}diag(S_{XX}(\omega)^{-1})^{-1/2}
\end{equation} 
\textbf{Thresholding:} Let $\rho_{ij}$ be the $ \ell_{\infty} $-norm of the entries for the partial coherence spectrum $ R(\omega) $ with respect to $ \omega $ i.e., 
\begin{equation}\label{}       
\rho_{ij} = \sup_{\omega} |R(\omega)_{ij}|.
\end{equation}
The value of $ \rho_{ij} $ signifies the partial correlation between $ Y_{i}(t) $ and $ Y_{j}(t) $ given $ \mathbf{Y}_{\mathscr{V}\backslash\{i,j\}} $ and it ranges from 0 to 1. The value of $ \rho_{ij} $ being low signifies that the partial correlation between $ Y_{i}(t) $ and $ Y_{j}(t) $ given $ \mathbf{Y}_{\mathscr{V}\backslash\{i,j\}} $ is negligible. A thresholding approach to remove such negligible partial correlation has been discussed in \citep{lounici2008}. Let $\rho^{*}$ be a threshold value chosen by this method. If $ \rho_{ij} \leq \rho^{*}$, then remove the edge $(i, j)$ from the graph. Collect all the partial coherence spectrum $(R(\omega))$ corresponding $ \gamma $ values along the trade-off curve by applying the threshold. Then assign the ranks using Bayes information criteria (BIC) scores and select the partial correlation graph which has the lowest score.

\subsection{Topology selection from observed count data}
For the observed multivariate count data $\mathbf{Y}(t)$ for $t=1,\dots,200$, we applied Algorithm \ref{MCEM_algorithm}. We found $ \gamma $ values 0, 0.0698, 0.2911, 0.5857, 0.6872, 0.9963, 1.8527, 2.6891 and 3 from the trade-off curve which is given in Fig. \ref{Tradeoff_simulated}. For varies values of regularization parameter on the trade off curve, the partial coherence spectrum $R(\omega)$ from \eqref{B53} is calculated by setting entries with $\rho_{ij}\leq0.1$ to zero. Thus, all the partial coherence spectrum after thresholding corresponding to these $\gamma$ values are given in Fig.\ref{Graphs_simulated}. We observed that sparsity has increased as we move from $\gamma =0$ to $\gamma = 3$. Next we rank all these partial coherence spectrum with BIC score. The BIC scores corresponding to the $\gamma$ values are given in Fig. \ref{Lam_BIC_simulated}. From the Fig. \ref{Lam_BIC_simulated}, it is apparent that the best regularization parameter is selected to be $ \gamma^{*}  = 0.2911$. The estimated partial correlations and the causalities between multiple count time series corresponding to $\gamma^{*}  = 0.2911$ are shown in separate graphs given in Fig. \ref{est_par} and Fig. \ref{est_cas}. From the true and estimated partial correlation graphs in Fig. \ref{true_est_par}, it is observed that the proposed algorithm has misclassified only three edges as zeros and two edges as non-zeros. Similarly,  from the  true and estimated causality graphs Fig .\ref{true_est_cas}, it is seen that only four directed edges are incorrectly estimated.

\begin{figure}[h!]
	\centering
	\includegraphics[scale=0.2,center]{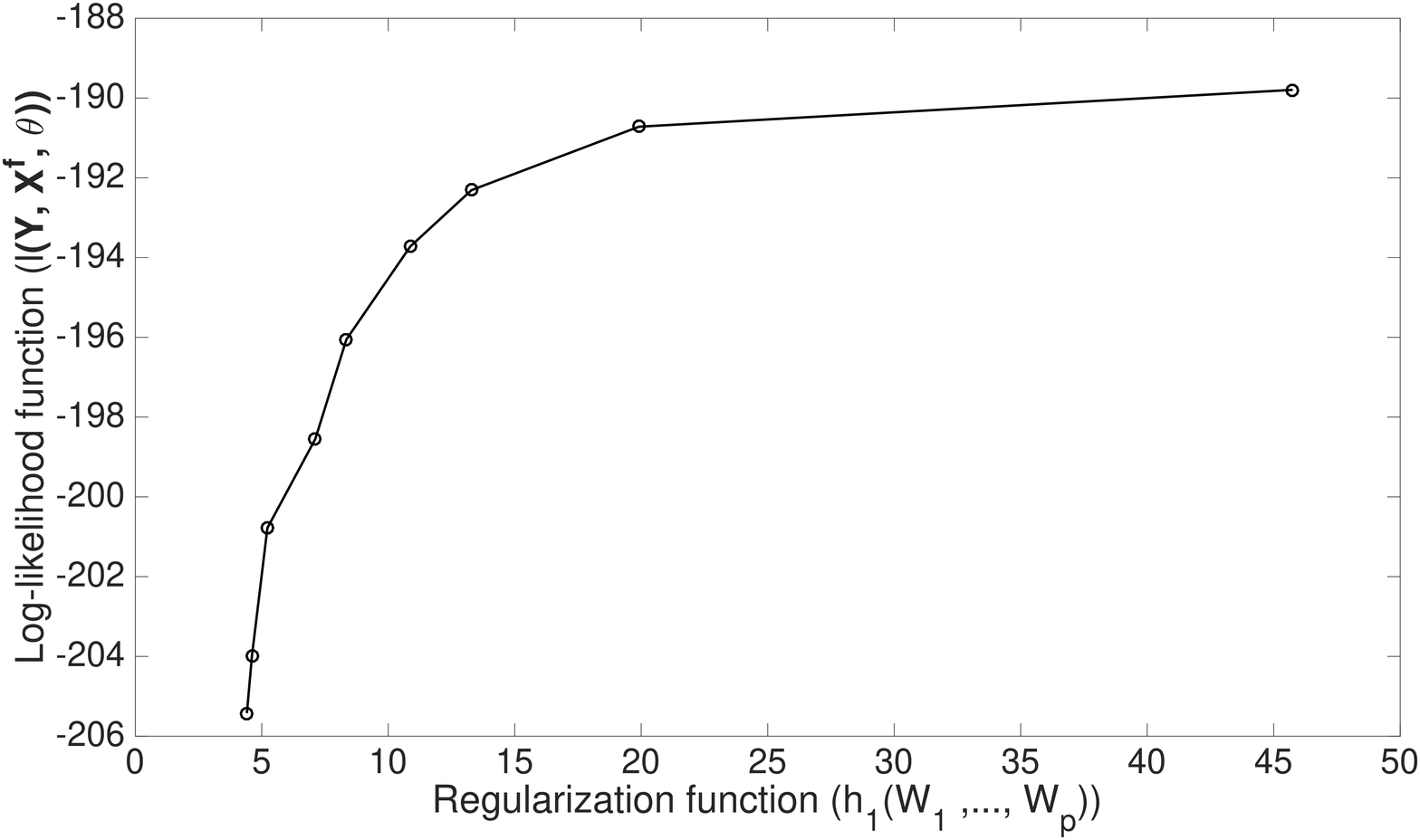}
	\caption{Trade-off curve between the conditional log-likelihood $l(\mathbf{Y}, \mathbf{X}, \bs{\theta})$ and  $ h_{1}(W_{0},W_{1},\dots,W_{p}) $}
	\label{Tradeoff_simulated}
\end{figure}
\begin{figure}[h!]
	\centering
	\includegraphics[scale=0.32,center]{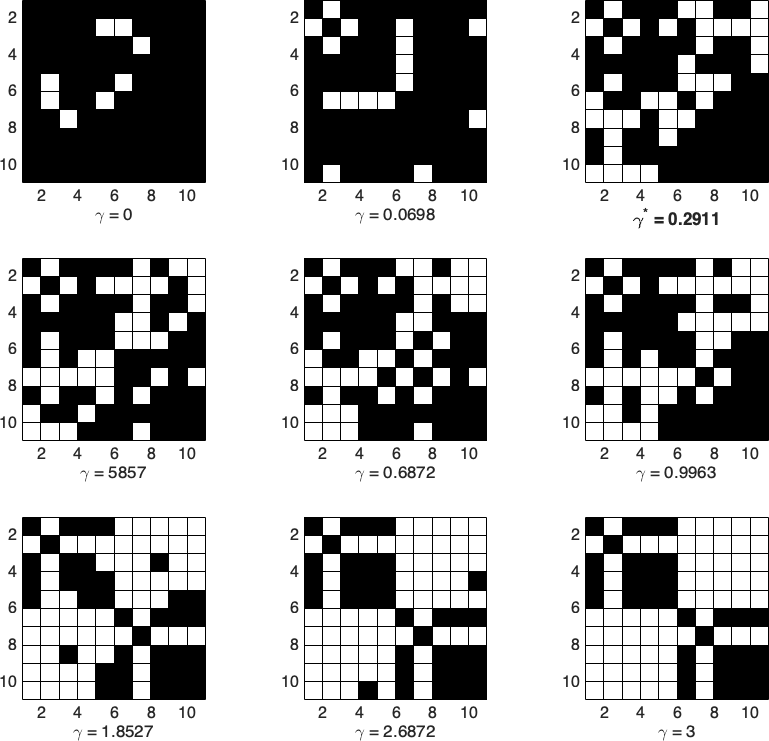}
	\caption{The partial correlation graphs along the trade of curve in Fig. \ref{Tradeoff_simulated}}
	\label{Graphs_simulated}
\end{figure}
\begin{figure}[h!]
	\centering
	\includegraphics[scale=0.2,center]{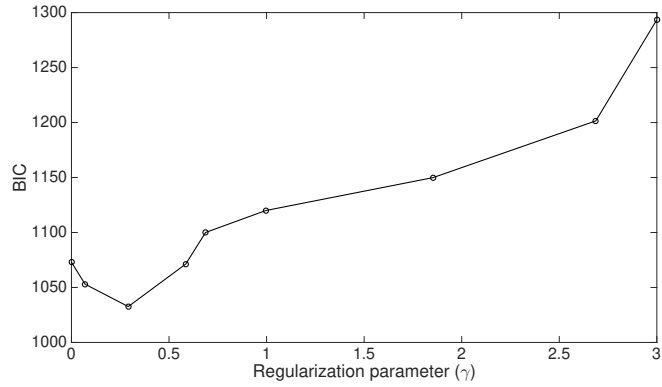}
	\caption{The BIC scores along the trade of curve in Fig. \ref{Tradeoff_simulated}}
	\label{Lam_BIC_simulated}
\end{figure}
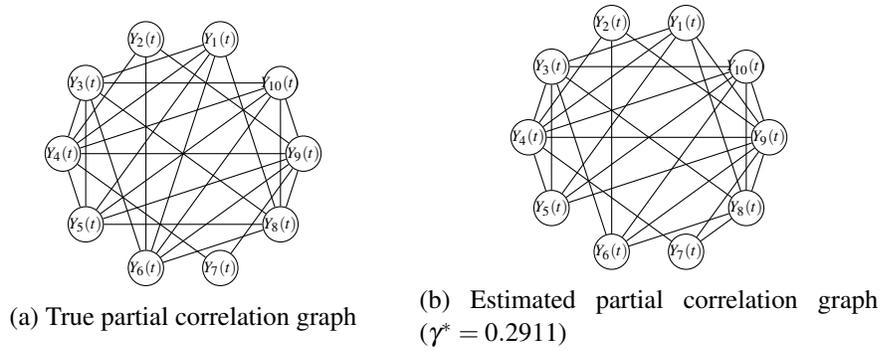
\begin{figure}[h!]
	\centering
	\begin{subfigure}{0.4\textwidth}
		\centering
		\begin{tikzpicture}
		\def \n {10}
		\def \radius {1.6cm}
		\def \margin {8} % margin in angles, depends on the radius
		
		\foreach \s in {1,...,\n}
		{
			\node[shape = circle,inner sep = -9pt,minimum size = 1pt,draw] (\s) at ({360/\n * (\s+1)}:\radius) {{\tiny$Y_{\s}(t)$}};
		}
		
		\draw[] (1) edge (3);
		\draw[] (1) edge (4);
		\draw[] (1) edge (5);
		\draw[] (1) edge (6);
		\draw[] (1) edge (8);
		\draw[] (2) edge (4);
		\draw[] (2) edge (6);
		\draw[] (2) edge (9);
		\draw[] (3) edge (4);
		\draw[] (3) edge (5);
		\draw[] (3) edge (6);
		\draw[] (3) edge (8);
		\draw[] (3) edge (10);
		\draw[] (4) edge (5);
		\draw[] (4) edge (7);
		\draw[] (4) edge (9);
		\draw[] (4) edge (10);
		\draw[] (5) edge (8);
		\draw[] (5) edge (9);
		\draw[] (5) edge (10);
		\draw[] (6) edge (8);
		\draw[] (6) edge (9);
		\draw[] (6) edge (10);
		\draw[] (7) edge (9);
		\draw[] (8) edge (9);
		\draw[] (8) edge (10);
		\draw[] (9) edge (10);
		\end{tikzpicture}
		\caption{True partial correlation graph}
		\label{true_par}
	\end{subfigure}
	\begin{subfigure}{0.4\textwidth} 
		\centering
		\begin{tikzpicture}
		\def \n {10}
		\def \radius {1.6cm}
		\def \margin {2} % margin in angles, depends on the radius
		\foreach \s in {1,...,\n}
		{
			\node[shape = circle,inner sep = -9pt,minimum size = 1pt,draw] (\s) at ({360/\n * (\s+1)}:\radius) {{\tiny$Y_{\s}(t)$}};
		}
		
		\draw[] (1) edge (3);
		\draw[] (1) edge (4);
		\draw[] (1) edge (5);
		\draw[] (1) edge (8);
		\draw[] (1) edge (9);
		\draw[] (2) edge (4);
		\draw[] (2) edge (6);
		\draw[] (2) edge (9);
		\draw[] (3) edge (4);
		\draw[] (3) edge (5);
		\draw[] (3) edge (6);
		\draw[] (3) edge (8);
		\draw[] (3) edge (10);
		\draw[] (4) edge (5);
		\draw[] (4) edge (7);
		\draw[] (4) edge (9);
		\draw[] (4) edge (10);
		\draw[] (5) edge (9);
		\draw[] (5) edge (10);
		\draw[] (6) edge (8);
		\draw[] (6) edge (9);
		\draw[] (6) edge (10);
		\draw[] (7) edge (8);
		\draw[] (7) edge (9);
		\draw[] (8) edge (9);
		\draw[] (8) edge (10);
		\draw[] (9) edge (10);
		\end{tikzpicture}
		\caption{Estimated partial correlation graph ($ \gamma^{*} = 0.2911 $)}
		\label{est_par}
	\end{subfigure}
	\caption{True and estimated partial correlation graphs}
	\label{true_est_par}
\end{figure}

\begin{figure}[h!]
	\centering
	\begin{subfigure}{0.4\textwidth}
		\centering
		\begin{tikzpicture}
		\def \n {10}
		\def \radius {1.6cm}
		\def \margin {8} % margin in angles, depends on the radius
		
		\foreach \s in {1,...,\n}
		{
			\node[shape = circle,inner sep = -9pt,minimum size = 1pt,draw] (\s) at ({360/\n * (\s+1)}:\radius) {{\tiny$Y_{\s}(t)$}};
		}
		
		\draw[->] (1) edge (4);
		\draw[->] (1) edge (6);
		\draw[->] (2) edge (4);
		\draw[->] (2) edge (7);
		\draw[->] (3) edge (1);
		\draw[->] (3) edge (4);
		\draw[->] (3) edge (8);
		\draw[->] (4) edge (1);
		\draw[->] (4) edge (3);
		\draw[->] (4) edge (5);
		\draw[->] (4) edge (9);
		\draw[->] (5) edge (3);
		\draw[->] (5) edge (4);
		\draw[->] (5) edge (10);
		\draw[->] (6) edge (1);
		\draw[->] (6) edge (9);
		\draw[->] (7) edge (2);
		\draw[->] (7) edge (9);
		\draw[->] (8) edge (3);
		\draw[->] (8) edge (6);
		\draw[->] (8) edge (9);
		\draw[->] (9) edge (4);
		\draw[->] (9) edge (6);
		\draw[->] (9) edge (8);
		\draw[->] (9) edge (10);
		\draw[->] (10) edge (5);
		\draw[->] (10) edge (8);
		\draw[->] (10) edge (9);
		\end{tikzpicture}
		\caption{True causality graph}
		\label{true_cas}
	\end{subfigure}
	\begin{subfigure}{0.4\textwidth} 
		\centering
		\begin{tikzpicture}
		\def \n {10}
		\def \radius {1.6cm}
		\def \margin {2} % margin in angles, depends on the radius
		\foreach \s in {1,...,\n}
		{
			\node[shape = circle,inner sep = -9pt,minimum size = 1pt,draw] (\s) at ({360/\n * (\s+1)}:\radius) {{\tiny$Y_{\s}(t)$}};
		}
		
		\draw[->] (1) edge (4);
		\draw[->] (1) edge (6);
		\draw[->] (1) edge (9);
		\draw[->] (2) edge (4);
		\draw[->] (2) edge (7);
		\draw[->] (3) edge (1);
		\draw[->] (3) edge (4);
		\draw[->] (3) edge (8);
		\draw[->] (4) edge (1);
		\draw[->] (4) edge (3);
		\draw[->] (4) edge (5);
		\draw[->] (4) edge (9);
		\draw[->] (5) edge (3);
		\draw[->] (5) edge (4);
		\draw[->] (5) edge (10);
		\draw[->] (6) edge (1);
		\draw[->] (6) edge (3);
		\draw[->] (6) edge (4);
		\draw[->] (6) edge (9);
		\draw[->] (7) edge (2);
		\draw[->] (7) edge (4);
		\draw[->] (7) edge (9);
		\draw[->] (8) edge (3);
		\draw[->] (8) edge (6);
		\draw[->] (8) edge (9);
		\draw[->] (9) edge (4);
		\draw[->] (9) edge (6);
		\draw[->] (9) edge (8);
		\draw[->] (9) edge (10);
		\draw[->] (10) edge (5);
		\draw[->] (10) edge (8);
		\draw[->] (10) edge (9);
		
		\end{tikzpicture}
		\caption{Estimated causality graph ($ \gamma^{*} = 0.2911 $)}
		\label{est_cas}
	\end{subfigure}
	\caption{True and estimated causality graphs}
	\label{true_est_cas}
\end{figure}
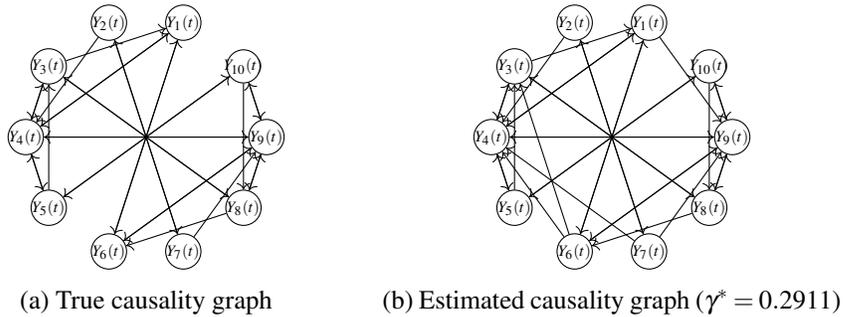

%%%%%%%%%%%%%%%%%%%%%%%%%%%%%%%%%%%%%%%%%%%%%%%%%

\section{Dengue spread in Greater Mumbai}\label{sec:Real data}

Dengue is a viral infectious disease transmitted to humans through the bites of infected Aedes aegypti mosquitoes. It is one of the most severe health problems being faced globally. The Mumbai Metropolitan Area (previously Greater Mumbai Metropolitan Area) is divided into twenty-four administrative divisions known as wards. The dengue counts are observed weekly from January 2010 to December 2015 from the wards of Greater Mumbai by the public health department of the Municipal Corporation of Greater Mumbai (MCGM). The length of each time series is 318.  

\subsection{Model Estimation}
In this section, we estimate the partial correlation and causality graph using the proposed Algorithm \ref{MCEM_algorithm} from the observed 24-dimensional dengue data set. The time series plot of normalized dengue counts of some of the wards (A, B, and C) are shown in Fig. \ref{data}. 

\begin{remark}
\label{remark_MCGM}
In the plots below, the absolute counts are not shown due to data privacy rules of MCGM. However the proposed algorithm and the final results are based on the actual counts.
\end{remark}

From these time series plots, the increasing trend and yearly seasonality with high dengue counts being reported during monsoon months are observed. The non-stationary behaviour of the dengue counts is also apparent from these plots. Thus, a linear trend and yearly seasonality represented by a pair of sine and cosine terms, are considered in the covariate vector $\textbf{z}_{t,i}$ for all wards $i = 1,\dots,24$ of the model \eqref{A1}. Then model \eqref{A1} becomes 
\begin{align}\label{real_model}
Y_{i}(t)|X_{i}(t),\textbf{z}_{t,i}\sim \text{Poisson}(\mu_{i}(t)).
\end{align}
where $\textbf{z}_{t,i}=\begin{bmatrix}1&t&cos(2\pi t/52)&sin(2\pi t/52)\end{bmatrix}^T$.
%and $ X_{i}(t) $ follows an AR(p) process given by
%\begin{align}
%X_{i}(t) = \sum_{j = 1}^{n}a_{ij}(1)X_{j}(t-1)+\dots+\sum_{j = 1}^{n}a_{ij}(p)X_{j}(t-p)+ \epsilon_{i}(t).
%\end{align}
\begin{figure}[h!]
	\centering
	\includegraphics[scale=0.25,center]{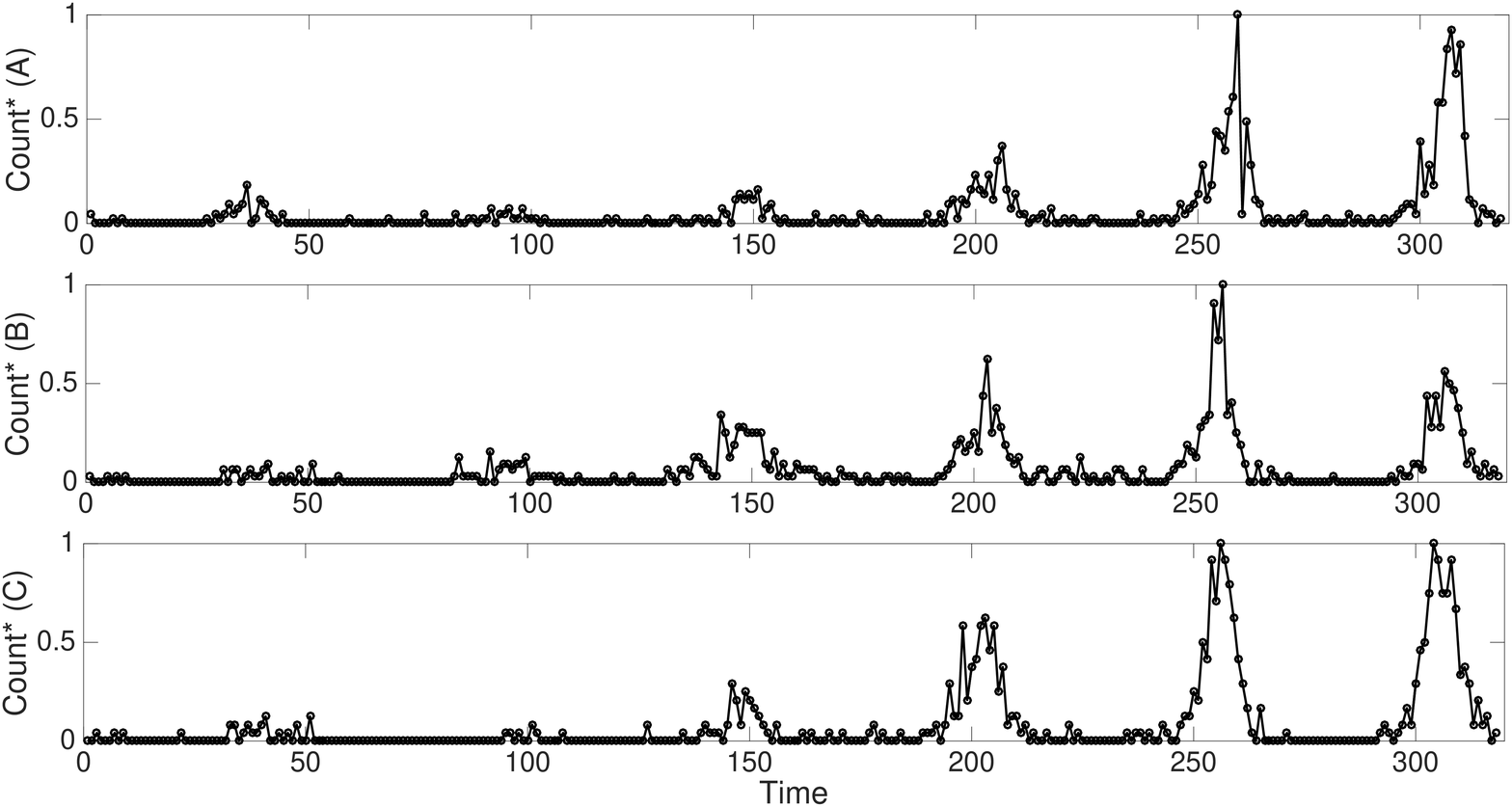}
	\caption{Time series plot of normalized dengue counts of A, B and C wards where count* indicates the true count normalized with respect to the maximum weekly count observed over the observed duration (318 weeks)}
	\label{data}
\end{figure}

We estimate the unknown parameters of the model \eqref{real_model} and \eqref{A3} with order ranging from $p=0$ to $p=3$ using Algorithm \ref{MCEM_algorithm}. All these models are ranked with second-order Akaike (AIC$_\text{c}$) and Bayes information criteria (BIC) which are given in Table. \ref{AIC_BIC}. The model of order $p=1$ which corresponds to the lowest AIC$_\text{c}$ and BIC score is selected.  
\begin{table}[h!]
	\centering
	\begin{tabular}{|c|c|c|}
		\hline
		& AIC$_\text{c}$ & BIC    \\ \hline
		$p=0$ & 8924.5  & 13195.8 \\ \hline
		\bs{$p=1$} & \textbf{7176.2}  & \textbf{12851.1} \\ \hline
		$p=2$ & 7377.6  & 16530.3 \\ \hline
		$p=3$ & 7405.1  & 19844.9 \\ \hline
	\end{tabular}
	\caption{AIC$_\text{c}$ and BIC scores for different values of order $p$}
	\label{AIC_BIC}
\end{table}
Thus for the model with order $p=1$, we find the trade-off curve between the conditional log-likelihood ($l(\mathbf{Y}, \mathbf{X}, \bs{\theta})$) and the $\ell_{1}$-type regularization function ($ h_{1}(W_{0},W_{1},\dots,W_{p}) $) which is shown in Fig. \ref{Tradeoff}. After thresholding (see section \ref{threshold}) with threshold $\rho^{*}=0.1$, the normalized inverse spectral density matrices from \eqref{B53} along the trade-off curve (Fig. \ref{Tradeoff}) are given in Fig. \ref{Graphs}. We observe that sparsity pattern varies in the estimated inverse spectral density matrix from dense ($\gamma $ small) to diagonal ($\gamma $ large). We rank all of the partial correlation graphs along the trade-off curve with BIC scores which is shown in Fig. \ref{lambda_BIC}. The regularization parameter ($\gamma$) values along the trade-off curve and the corresponding BIC scores are given in Table. \ref{Lam_BIC}. The estimated partial correlations and causalities between multiple count time series corresponding to $\gamma^{*}  = 0.682$ are shown in separate graphs given in Fig.\ref{estimated_par} and Fig. \ref{estimated_cas}.
\\

\begin{figure}[]
	\centering
	\includegraphics[scale=0.18,center]{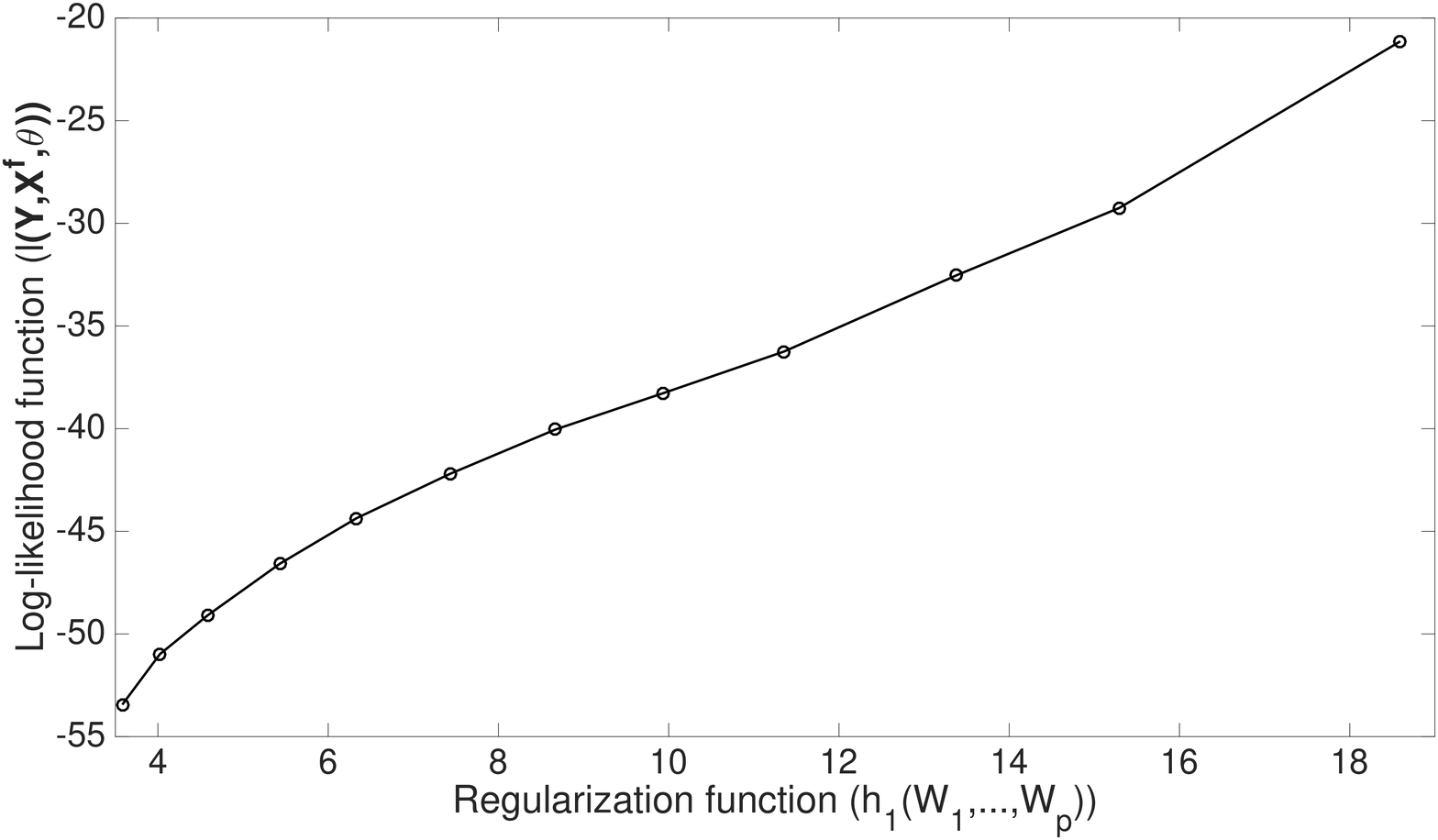}
	\caption{Trade-off curve between log-likelihood function ($l(\mathbf{Y}, \mathbf{X}, \bs{\theta})$) and regularization function ($h_{1}(W_{0},W_{1},\hdots, W_{p})$)}
	\label{Tradeoff}
\end{figure}
\vspace{-0.5cm}
\begin{figure}[]
	\centering
	\includegraphics[scale=0.4,center]{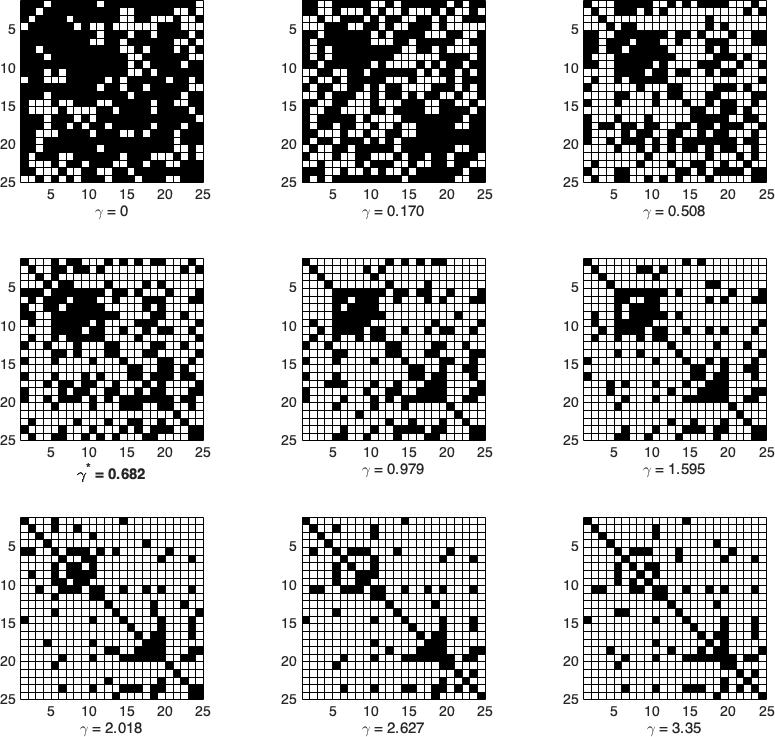}
	\caption{Partial correlation graphs along the trade-off curve in Fig. \ref{Tradeoff} \ref{Tradeoff}}
	\label{Graphs}
\end{figure}
\begin{figure}[]
	\centering
	\includegraphics[scale=0.19,center]{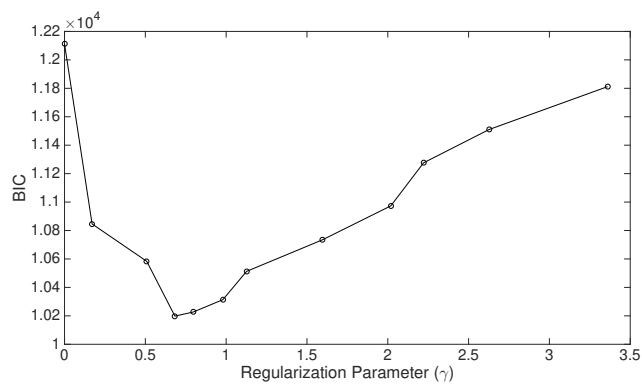}
	\caption{BIC scores along the trade-off curve in Fig. \ref{Tradeoff}}
	\label{lambda_BIC}
\end{figure}
\begin{table}[]
	\centering
	\begin{tabular}{|c|c|c|c|c|c|}
		\hline
		$\gamma$ & BIC      & $\gamma$ & BIC      & $\gamma$ & BIC      \\ \hline
		0        & 12111.43 & 0.7969   & 10227.05 & 2.0186   & 10973.20 \\ \hline
		0.1706   & 10846.09 & 0.9795   & 10313.73 & 2.2213   & 11275.20 \\ \hline
		0.5084   & 10583.76 & 1.1266   & 10512.76 & 2.6271   & 11510.35 \\ \hline
		\textbf{0.6829}   & \textbf{10197.97} & 1.5951   & 10734.25 & 3.35     & 11811.79 \\ \hline
	\end{tabular}
	\caption{Regularization parameter ($\gamma$) values along the trade-off curve in Fig. \ref{Tradeoff} and corresponding BIC scores}
	\label{Lam_BIC}
\end{table}
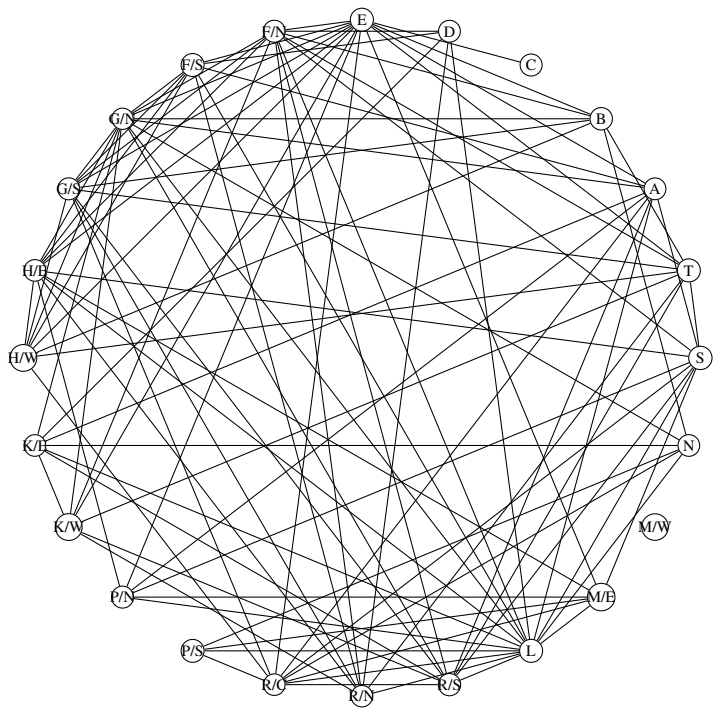
\begin{figure}[]
	\centering
	\begin{tikzpicture}
	\def \n {24}
	\def \radius {4.5cm}
	\def \margin {2} % margin in angles, depends on the radius
	\def \a {6}
	\def \b {1}
	\def \c {5}
	\def \d {7}
	
	\node[shape = circle,inner sep = -\c pt,minimum size = \b pt,draw] (1) at ({360/\n * (1+1)}:\radius){\tiny A};
	\node[shape = circle,inner sep = -\c pt,minimum size = \b pt,draw] (2) at ({360/\n * (2+1)}:\radius){{\tiny B}};
	\node[shape = circle,inner sep = -\c pt,minimum size = \b pt,draw] (3) at ({360/\n * (3+1)}:\radius){{\tiny C}};
	\node[shape = circle,inner sep = -\c pt,minimum size = \b pt,draw] (4) at ({360/\n * (4+1)}:\radius){{\tiny D}};
	\node[shape = circle,inner sep = -\c pt,minimum size = \b pt,draw] (5) at ({360/\n * (5+1)}:\radius){{\tiny E}};
	\node[shape = circle,inner sep = -\a pt,minimum size = \b pt,draw] (6) at ({360/\n * (6+1)}:\radius){{\tiny F/N}};
	\node[shape = circle,inner sep = -\a pt,minimum size = \b pt,draw] (7) at ({360/\n * (7+1)}:\radius){{\tiny F/S}};
	\node[shape = circle,inner sep = -\a pt,minimum size = \b pt,draw] (8) at ({360/\n * (8+1)}:\radius){{\tiny G/N}};
	\node[shape = circle,inner sep = -\a pt,minimum size = \b pt,draw] (9) at ({360/\n * (9+1)}:\radius){{\tiny G/S}};
	\node[shape = circle,inner sep = -\a pt,minimum size = \b pt,draw] (10) at ({360/\n * (10+1)}:\radius){{\tiny H/E}};
	\node[shape = circle,inner sep = -\d pt,minimum size = \b pt,draw] (11) at ({360/\n * (11+1)}:\radius){{\tiny H/W}};
	\node[shape = circle,inner sep = -\a pt,minimum size = \b pt,draw] (12) at ({360/\n * (12+1)}:\radius){{\tiny K/E}};
	\node[shape = circle,inner sep = -\d pt,minimum size = \b pt,draw] (13) at ({360/\n * (13+1)}:\radius){{\tiny K/W}};
	\node[shape = circle,inner sep = -\a pt,minimum size = \b pt,draw] (14) at ({360/\n * (14+1)}:\radius){{\tiny P/N}};
	\node[shape = circle,inner sep = -\a pt,minimum size = \b pt,draw] (15) at ({360/\n * (15+1)}:\radius){{\tiny P/S}};
	\node[shape = circle,inner sep = -\a pt,minimum size = \b pt,draw] (16) at ({360/\n * (16+1)}:\radius){{\tiny R/C}};
	\node[shape = circle,inner sep = -\a pt,minimum size = \b pt,draw] (17) at ({360/\n * (17+1)}:\radius){{\tiny R/N}};
	\node[shape = circle,inner sep = -\a pt,minimum size = \b pt,draw] (18) at ({360/\n * (18+1)}:\radius){{\tiny R/S}};
	\node[shape = circle,inner sep = -\c pt,minimum size = \b pt,draw] (19) at ({360/\n * (19+1)}:\radius){{\tiny L}};
	\node[shape = circle,inner sep = -\d pt,minimum size = \b pt,draw] (20) at ({360/\n * (20+1)}:\radius){{\tiny M/E}};
	\node[shape = circle,inner sep = -\d pt,minimum size = \b pt,draw] (21) at ({360/\n * (21+1)}:\radius){{\tiny M/W}};
	\node[shape = circle,inner sep = -\c pt,minimum size = \b pt,draw] (22) at ({360/\n * (22+1)}:\radius){{\tiny N}};
	\node[shape = circle,inner sep = -\c pt,minimum size = \b pt,draw] (23) at ({360/\n * (23+1)}:\radius){{\tiny S}};
	\node[shape = circle,inner sep = -\c pt,minimum size = \b pt,draw] (24) at ({360/\n * (24+1)}:\radius){{\tiny T}};

	\draw(1) edge (5);
	\draw(1) edge (7);
	\draw(1) edge (8);
	\draw(1) edge (12);
	\draw(1) edge (14);
	\draw(1) edge (16);
	\draw(1) edge (18);
	\draw(1) edge (19);
	\draw(1) edge (23);
	\draw(2) edge (5);
	\draw(2) edge (6);
	\draw(2) edge (8);
	\draw(2) edge (9);
	\draw(2) edge (11);
	\draw(2) edge (22);
	\draw(2) edge (24);
	\draw(3) edge (5);
	\draw(4) edge (6);
	\draw(4) edge (7);
	\draw(4) edge (12);
	\draw(4) edge (17);
	\draw(4) edge (19);
	\draw(5) edge (6);
	\draw(5) edge (7);
	\draw(5) edge (8);
	\draw(5) edge (9);
	\draw(5) edge (10);
	\draw(5) edge (11);
	\draw(5) edge (13);
	\draw(5) edge (14);
	\draw(5) edge (16);
	\draw(5) edge (20);
	\draw(5) edge (24);
	\draw(6) edge (8);
	\draw(6) edge (9);
	\draw(6) edge (10);
	\draw(6) edge (13);
	\draw(6) edge (17);
	\draw(6) edge (18);
	\draw(6) edge (19);
	\draw(6) edge (23);
	\draw(6) edge (24);
	\draw(7) edge (8);
	\draw(7) edge (9);
	\draw(7) edge (10);
	\draw(7) edge (11);
	\draw(7) edge (17);
	\draw(7) edge (19);
	\draw(8) edge (9);
	\draw(8) edge (10);
	\draw(8) edge (11);
	\draw(8) edge (12);
	\draw(8) edge (13);
	\draw(8) edge (18);
	\draw(8) edge (19);
	\draw(8) edge (22);
	\draw(9) edge (11);
	\draw(9) edge (16);
	\draw(9) edge (17);
	\draw(9) edge (18);
	\draw(9) edge (24);
	\draw(10) edge (11);
	\draw(10) edge (14);
	\draw(10) edge (17);
	\draw(10) edge (19);
	\draw(10) edge (20);
	\draw(10) edge (23);
	\draw(11) edge (16);
	\draw(11) edge (24);
	\draw(12) edge (13);
	\draw(12) edge (18);
	\draw(12) edge (19);
	\draw(12) edge (22);
	\draw(13) edge (17);
	\draw(13) edge (18);
	\draw(13) edge (24);
	\draw(14) edge (19);
	\draw(14) edge (20);
	\draw(14) edge (23);
	\draw(15) edge (16);
	\draw(15) edge (19);
	\draw(15) edge (20);
	\draw(15) edge (22);
	\draw(16) edge (18);
	\draw(16) edge (19);
	\draw(16) edge (20);
	\draw(16) edge (22);
	\draw(16) edge (23);
	\draw(17) edge (19);
	\draw(17) edge (24);
	\draw(18) edge (19);
	\draw(18) edge (23);
	\draw(18) edge (24);
	\draw(19) edge (20);
	\draw(19) edge (22);
	\draw(19) edge (23);
	\draw(20) edge (23);
	\draw(23) edge (24);		
	\end{tikzpicture}
	\caption{Estimated partial correlation graph with regularization ($\gamma^* = 0.682$)}
	\label{estimated_par}
\end{figure}

\begin{figure}[]
	\centering
	\begin{tikzpicture}
	\def \n {24}
	\def \radius {4.5cm}
	\def \margin {2} % margin in angles, depends on the radius
	\def \a {6}
	\def \b {1}
	\def \c {5}
	\def \d {7}
	
	\node[shape = circle,inner sep = -\c pt,minimum size = \b pt,draw] (1) at ({360/\n * (1+1)}:\radius){\tiny A};
	\node[shape = circle,inner sep = -\c pt,minimum size = \b pt,draw] (2) at ({360/\n * (2+1)}:\radius){{\tiny B}};
	\node[shape = circle,inner sep = -\c pt,minimum size = \b pt,draw] (3) at ({360/\n * (3+1)}:\radius){{\tiny C}};
	\node[shape = circle,inner sep = -\c pt,minimum size = \b pt,draw] (4) at ({360/\n * (4+1)}:\radius){{\tiny D}};
	\node[shape = circle,inner sep = -\c pt,minimum size = \b pt,draw] (5) at ({360/\n * (5+1)}:\radius){{\tiny E}};
	\node[shape = circle,inner sep = -\a pt,minimum size = \b pt,draw] (6) at ({360/\n * (6+1)}:\radius){{\tiny F/N}};
	\node[shape = circle,inner sep = -\a pt,minimum size = \b pt,draw] (7) at ({360/\n * (7+1)}:\radius){{\tiny F/S}};
	\node[shape = circle,inner sep = -\a pt,minimum size = \b pt,draw] (8) at ({360/\n * (8+1)}:\radius){{\tiny G/N}};
	\node[shape = circle,inner sep = -\a pt,minimum size = \b pt,draw] (9) at ({360/\n * (9+1)}:\radius){{\tiny G/S}};
	\node[shape = circle,inner sep = -\a pt,minimum size = \b pt,draw] (10) at ({360/\n * (10+1)}:\radius){{\tiny H/E}};
	\node[shape = circle,inner sep = -\d pt,minimum size = \b pt,draw] (11) at ({360/\n * (11+1)}:\radius){{\tiny H/W}};
	\node[shape = circle,inner sep = -\a pt,minimum size = \b pt,draw] (12) at ({360/\n * (12+1)}:\radius){{\tiny K/E}};
	\node[shape = circle,inner sep = -\d pt,minimum size = \b pt,draw] (13) at ({360/\n * (13+1)}:\radius){{\tiny K/W}};
	\node[shape = circle,inner sep = -\a pt,minimum size = \b pt,draw] (14) at ({360/\n * (14+1)}:\radius){{\tiny P/N}};
	\node[shape = circle,inner sep = -\a pt,minimum size = \b pt,draw] (15) at ({360/\n * (15+1)}:\radius){{\tiny P/S}};
	\node[shape = circle,inner sep = -\a pt,minimum size = \b pt,draw] (16) at ({360/\n * (16+1)}:\radius){{\tiny R/C}};
	\node[shape = circle,inner sep = -\a pt,minimum size = \b pt,draw] (17) at ({360/\n * (17+1)}:\radius){{\tiny R/N}};
	\node[shape = circle,inner sep = -\a pt,minimum size = \b pt,draw] (18) at ({360/\n * (18+1)}:\radius){{\tiny R/S}};
	\node[shape = circle,inner sep = -\c pt,minimum size = \b pt,draw] (19) at ({360/\n * (19+1)}:\radius){{\tiny L}};
	\node[shape = circle,inner sep = -\d pt,minimum size = \b pt,draw] (20) at ({360/\n * (20+1)}:\radius){{\tiny M/E}};
	\node[shape = circle,inner sep = -\d pt,minimum size = \b pt,draw] (21) at ({360/\n * (21+1)}:\radius){{\tiny M/W}};
	\node[shape = circle,inner sep = -\c pt,minimum size = \b pt,draw] (22) at ({360/\n * (22+1)}:\radius){{\tiny N}};
	\node[shape = circle,inner sep = -\c pt,minimum size = \b pt,draw] (23) at ({360/\n * (23+1)}:\radius){{\tiny S}};
	\node[shape = circle,inner sep = -\c pt,minimum size = \b pt,draw] (24) at ({360/\n * (24+1)}:\radius){{\tiny T}};
	
	\draw[->] (1) edge (5);
	\draw[->] (1) edge (7);
	\draw[->] (1) edge (12);
	\draw[->] (1) edge (13);
	\draw[->] (1) edge (14);
	\draw[->] (1) edge (16);
	\draw[->] (1) edge (19);
	\draw[->] (2) edge (1);
	\draw[->] (2) edge (5);
	\draw[->] (2) edge (6);
	\draw[->] (2) edge (7);
	\draw[->] (2) edge (8);
	\draw[->] (2) edge (9);
	\draw[->] (2) edge (11);
	\draw[->] (2) edge (14);
	\draw[->] (2) edge (20);
	\draw[->] (3) edge (5);
	\draw[->] (3) edge (10);
	\draw[->] (3) edge (13);
	\draw[->] (4) edge (3);
	\draw[->] (4) edge (6);
	\draw[->] (4) edge (7);
	\draw[->] (4) edge (12);
	\draw[->] (4) edge (17);
	\draw[->] (4) edge (18);
	\draw[->] (4) edge (19);
	\draw[->] (5) edge (1);
	\draw[->] (5) edge (2);
	\draw[->] (5) edge (4);
	\draw[->] (5) edge (6);
	\draw[->] (5) edge (7);
	\draw[->] (5) edge (8);
	\draw[->] (5) edge (12);
	\draw[->] (5) edge (13);
	\draw[->] (5) edge (14);
	\draw[->] (5) edge (17);
	\draw[->] (5) edge (18);
	\draw[->] (5) edge (20);
	\draw[->] (5) edge (23);
	\draw[->] (6) edge (1);
	\draw[->] (6) edge (2);
	\draw[->] (6) edge (3);
	\draw[->] (6) edge (4);
	\draw[->] (6) edge (5);
	\draw[->] (6) edge (7);
	\draw[->] (6) edge (8);
	\draw[->] (6) edge (9);
	\draw[->] (6) edge (10);
	\draw[->] (6) edge (15);
	\draw[->] (6) edge (16);
	\draw[->] (6) edge (19);
	\draw[->] (6) edge (20);
	\draw[->] (6) edge (22);
	\draw[->] (6) edge (23);
	\draw[->] (6) edge (24);
	\draw[->] (7) edge (5);
	\draw[->] (7) edge (6);
	\draw[->] (7) edge (8);
	\draw[->] (7) edge (9);
	\draw[->] (7) edge (10);
	\draw[->] (7) edge (11);
	\draw[->] (7) edge (13);
	\draw[->] (7) edge (15);
	\draw[->] (7) edge (17);
	\draw[->] (7) edge (18);
	\draw[->] (7) edge (19);
	\draw[->] (8) edge (1);
	\draw[->] (8) edge (2);
	\draw[->] (8) edge (3);
	\draw[->] (8) edge (4);
	\draw[->] (8) edge (5);
	\draw[->] (8) edge (7);
	\draw[->] (8) edge (9);
	\draw[->] (8) edge (10);
	\draw[->] (8) edge (12);
	\draw[->] (8) edge (17);
	\draw[->] (8) edge (18);
	\draw[->] (8) edge (19);
	\draw[->] (8) edge (20);
	\draw[->] (8) edge (21);
	\draw[->] (8) edge (22);
	\draw[->] (9) edge (1);
	\draw[->] (9) edge (2);
	\draw[->] (9) edge (5);
	\draw[->] (9) edge (7);
	\draw[->] (9) edge (8);
	\draw[->] (9) edge (13);
	\draw[->] (9) edge (16);
	\draw[->] (9) edge (18);
	\draw[->] (9) edge (19);
	\draw[->] (9) edge (21);
	\draw[->] (9) edge (24);
	\draw[->] (10) edge (1);
	\draw[->] (10) edge (3);
	\draw[->] (10) edge (4);
	\draw[->] (10) edge (5);
	\draw[->] (10) edge (7);
	\draw[->] (10) edge (8);
	\draw[->] (10) edge (11);
	\draw[->] (10) edge (12);
	\draw[->] (10) edge (13);
	\draw[->] (10) edge (14);
	\draw[->] (10) edge (16);
	\draw[->] (10) edge (17);
	\draw[->] (10) edge (18);
	\draw[->] (10) edge (19);
	\draw[->] (10) edge (21);
	\draw[->] (10) edge (22);
	\draw[->] (10) edge (24);
	\draw[->] (11) edge (2);
	\draw[->] (11) edge (8);
	\draw[->] (11) edge (10);
	\draw[->] (11) edge (18);
	\draw[->] (11) edge (20);
	\draw[->] (11) edge (22);
	\draw[->] (11) edge (23);
	\draw[->] (12) edge (1);
	\draw[->] (12) edge (2);
	\draw[->] (12) edge (9);
	\draw[->] (12) edge (13);
	\draw[->] (12) edge (17);
	\draw[->] (12) edge (18);
	\draw[->] (12) edge (19);
	\draw[->] (12) edge (20);
	\draw[->] (12) edge (22);
	\draw[->] (13) edge (8);
	\draw[->] (13) edge (11);
	\draw[->] (13) edge (17);
	\draw[->] (13) edge (18);
	\draw[->] (14) edge (1);
	\draw[->] (14) edge (2);
	\draw[->] (14) edge (3);
	\draw[->] (14) edge (5);
	\draw[->] (14) edge (10);
	\draw[->] (14) edge (13);
	\draw[->] (14) edge (15);
	\draw[->] (14) edge (19);
	\draw[->] (14) edge (20);
	\draw[->] (14) edge (23);
	\draw[->] (14) edge (24);
	\draw[->] (15) edge (4);
	\draw[->] (15) edge (5);
	\draw[->] (15) edge (13);
	\draw[->] (15) edge (16);
	\draw[->] (15) edge (19);
	\draw[->] (15) edge (20);
	\draw[->] (15) edge (22);
	\draw[->] (15) edge (24);
	\draw[->] (16) edge (1);
	\draw[->] (16) edge (3);
	\draw[->] (16) edge (6);
	\draw[->] (16) edge (9);
	\draw[->] (16) edge (11);
	\draw[->] (16) edge (17);
	\draw[->] (16) edge (18);
	\draw[->] (16) edge (19);
	\draw[->] (16) edge (20);
	\draw[->] (16) edge (21);
	\draw[->] (16) edge (22);
	\draw[->] (16) edge (23);
	\draw[->] (17) edge (1);
	\draw[->] (17) edge (3);
	\draw[->] (17) edge (6);
	\draw[->] (17) edge (10);
	\draw[->] (17) edge (11);
	\draw[->] (17) edge (12);
	\draw[->] (17) edge (13);
	\draw[->] (17) edge (15);
	\draw[->] (17) edge (19);
	\draw[->] (17) edge (20);
	\draw[->] (17) edge (21);
	\draw[->] (17) edge (24);
	\draw[->] (18) edge (2);
	\draw[->] (18) edge (6);
	\draw[->] (18) edge (7);
	\draw[->] (18) edge (8);
	\draw[->] (18) edge (9);
	\draw[->] (18) edge (12);
	\draw[->] (18) edge (16);
	\draw[->] (18) edge (17);
	\draw[->] (18) edge (19);
	\draw[->] (19) edge (1);
	\draw[->] (19) edge (7);
	\draw[->] (19) edge (9);
	\draw[->] (19) edge (12);
	\draw[->] (19) edge (13);
	\draw[->] (19) edge (15);
	\draw[->] (19) edge (16);
	\draw[->] (19) edge (17);
	\draw[->] (19) edge (18);
	\draw[->] (19) edge (23);
	\draw[->] (19) edge (24);
	\draw[->] (20) edge (1);
	\draw[->] (20) edge (5);
	\draw[->] (20) edge (8);
	\draw[->] (20) edge (12);
	\draw[->] (20) edge (19);
	\draw[->] (21) edge (18);
	\draw[->] (22) edge (1);
	\draw[->] (22) edge (2);
	\draw[->] (22) edge (3);
	\draw[->] (22) edge (8);
	\draw[->] (22) edge (12);
	\draw[->] (22) edge (15);
	\draw[->] (22) edge (16);
	\draw[->] (22) edge (19);
	\draw[->] (22) edge (20);
	\draw[->] (22) edge (23);
	\draw[->] (23) edge (1);
	\draw[->] (23) edge (2);
	\draw[->] (23) edge (6);
	\draw[->] (23) edge (9);
	\draw[->] (23) edge (10);
	\draw[->] (23) edge (12);
	\draw[->] (23) edge (13);
	\draw[->] (23) edge (14);
	\draw[->] (23) edge (15);
	\draw[->] (23) edge (17);
	\draw[->] (23) edge (18);
	\draw[->] (23) edge (19);
	\draw[->] (23) edge (20);
	\draw[->] (23) edge (21);
	\draw[->] (23) edge (24);
	\draw[->] (24) edge (2);
	\draw[->] (24) edge (5);
	\draw[->] (24) edge (6);
	\draw[->] (24) edge (8);
	\draw[->] (24) edge (9);
	\draw[->] (24) edge (11);
	\draw[->] (24) edge (13);
	\draw[->] (24) edge (17);
	\draw[->] (24) edge (18);
	\draw[->] (24) edge (19);
	\draw[->] (24) edge (22);
	\draw[->] (24) edge (23);
	
	\end{tikzpicture}
	\caption{Estimated causality graph}
	\label{estimated_cas}
\end{figure}
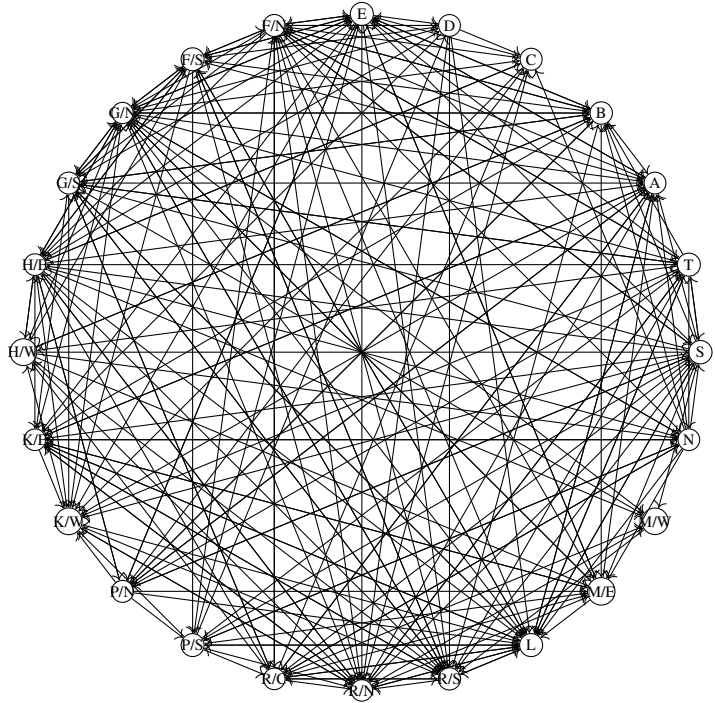

\subsection{Inference}
Based on the number of undirected edges between the wards in the estimated partial correlation graph in Fig. \ref{estimated_par}, we construct a colour coded map (see  Fig. \ref{Map1}) showing the number of incident edges for each ward. 
%In this map, the dark brown color ward shows that it has a highest number of edges than other wards and the light brown color ward shows that it has a lowest number of edges than other wards in the estimated partial correlation graph Fig. \ref{estimated_par}. 
Similarly, based on the number of outward edges of each ward in the estimated causality graph of Fig. \ref{estimated_cas}, we construct a colour coded map (see Fig. \ref{Map2}) showing the number of outgoing edges for each ward. All estimated model parameters such as undirected and directed edges for each ward along with normalized disease count (see Remark \ref{remark_MCGM}), are presented in the Table \ref{ward_count} . 

\begin{figure}[]
	\centering
	\includegraphics[scale=0.8,center]{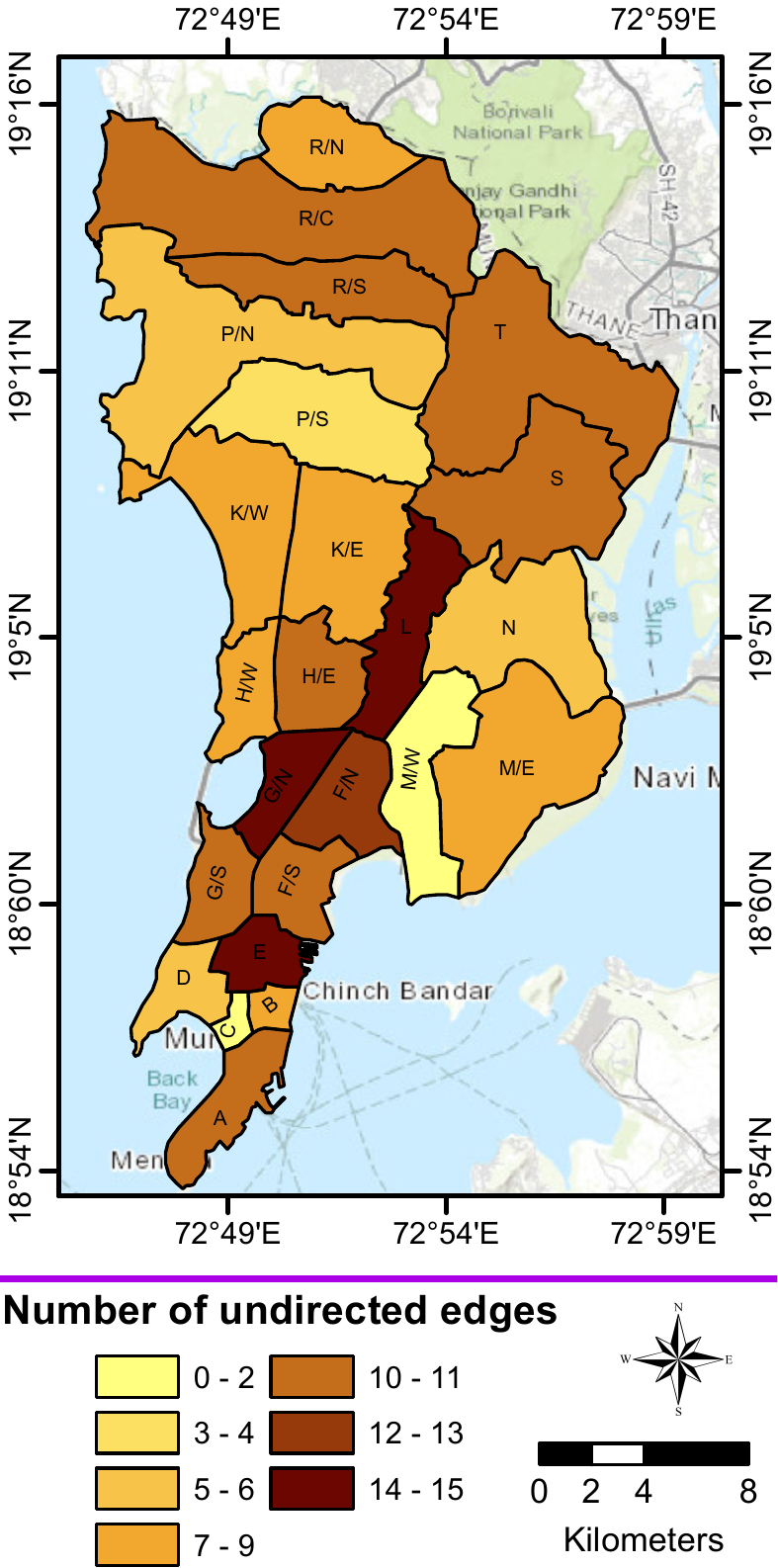}
	\caption{Each ward of Greater Mumbai is colour coded with the number of undirected edges in the estimated partial correlation graph Fig. \ref{estimated_par}}
	\label{Map1}
\end{figure}
\begin{figure}[]
	\centering
	\includegraphics[scale=0.8,center]{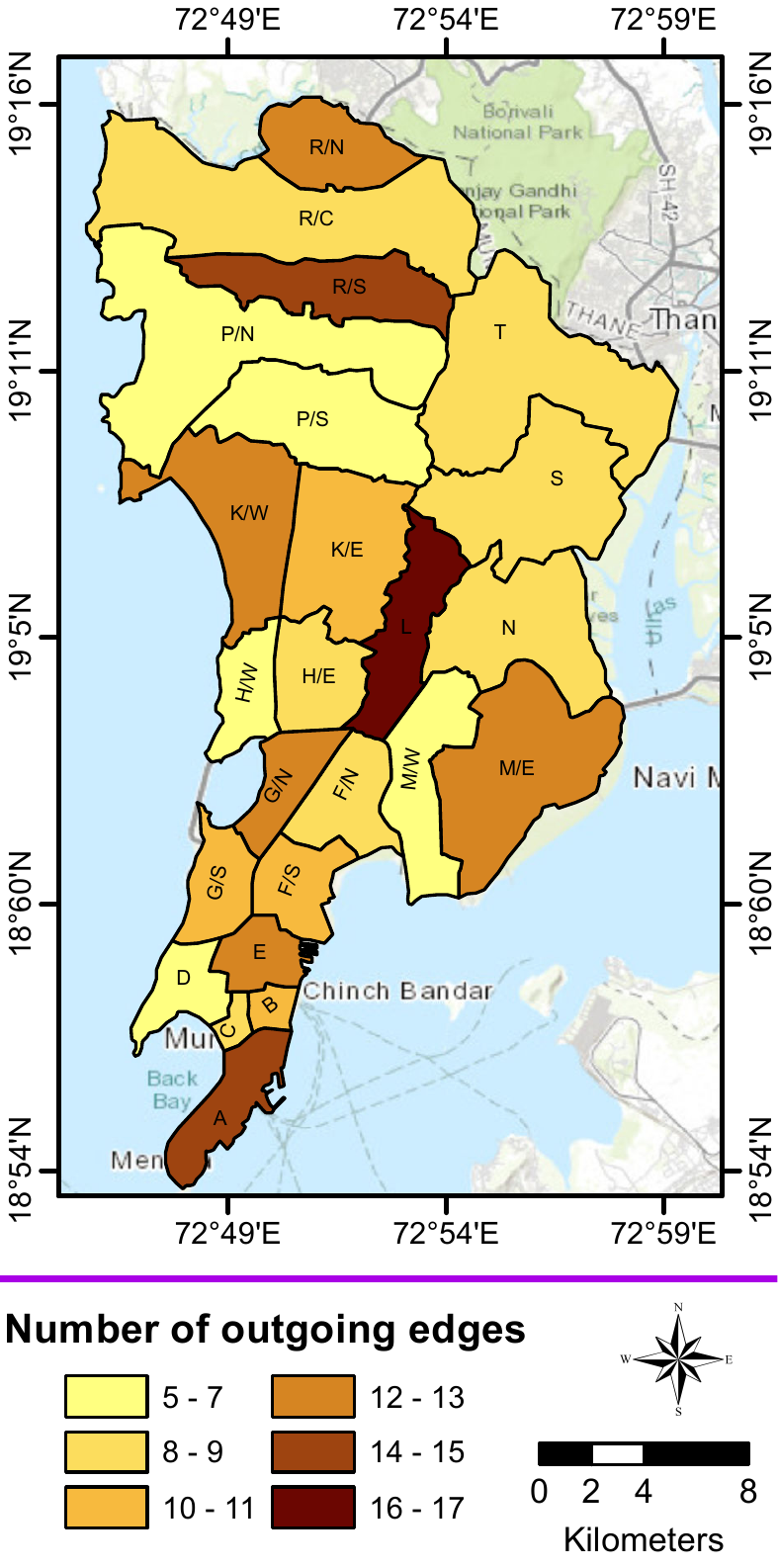}
	\caption{Each ward of Greater Mumbai is colour coded with the number of outgoing edges in the estimated causality graph Fig. \ref{estimated_cas}}
	\label{Map2}
\end{figure}
The plots of the normalized dengue counts, number of undirected edges from the estimated partial correlation graph Fig. \ref{estimated_par} and number of incoming and outgoing edges from the estimated causality graph Fig. \ref{estimated_cas} of each ward, are shown in Fig. \ref{count_edge_inc_out}. In Fig .\ref{count_edge_inc_out}, all counts are normalized with their corresponding maximum. 
%From this Fig. \ref{count_edge_inc_out}, we observe that there are some wards (A, L, B, etc.,) have less counts and more number of undirected, incoming and out going wards. That means these wards (A, L, B, etc.,) are really causing the disease spread even if the dengue counts are low. In many places, it is proportional to the counts but there are some wards which we can not be predicted from counts but from our analysis we are able to predict which wards are really causing disease spread.
\begin{table}[]
	\centering
	\begin{tabular}{|c|c|c|c|c|c|c|}
		\hline
		Ward & Count & UE & IE & OE & $IW$ & $OW$ \\ \hline
		A &0.252&0.6&0.411&0.823&0.508&0.631 \\ \hline
		B &0.217&0.466&0.529&0.647&0.369&0.486 \\ \hline
		C &0.201&0.066&0.176&0.470&0.202&0.295 \\ \hline
		D &0.278&0.333&0.411&0.294&0.343&0.222 \\ \hline
		E &0.984&0.933&0.764&0.705&0.799&0.429 \\ \hline
		F/N &0.668&0.8&0.941&0.529&0.885&0.307 \\ \hline
		F/S &0.481&0.6&0.647&0.588&0.624&0.430 \\ \hline
		G/N &0.586&0.866&0.882&0.705&0.937&0.567 \\ \hline
		G/S &0.964&0.666&0.647&0.588&0.727&0.317 \\ \hline
		H/E &1&0.666&1&0.470&1&0.430 \\ \hline
		H/W &0.370&0.533&0.411&0.411&0.421&0.354 \\ \hline
		K/E &0.365&0.466&0.529&0.647&0.584&0.414 \\ \hline
		K/W &0.372&0.466&0.235&0.764&0.358&0.477 \\ \hline
		P/N &0.607&0.4&0.647&0.294&0.718&0.287 \\ \hline
		P/S &0.431&0.266&0.470&0.411&0.510&0.276 \\ \hline
		R/C &0.352&0.666&0.705&0.470&0.695&0.347 \\ \hline
		R/N &0.638&0.533&0.705&0.705&0.753&0.425 \\ \hline
		R/S &0.521&0.666&0.529&0.823&0.639&0.706 \\ \hline
		L &0.522&1&0.647&1&0.766&1 \\ \hline
		M/E &0.261&0.466&0.294&0.705&0.318&0.515 \\ \hline
		M/W &0.116&0&0.058&0.352&0.143&0.419 \\ \hline
		N &0.379&0.4&0.588&0.470&0.602&0.308 \\ \hline
		S &0.608&0.6&0.882&0.470&0.862&0.362 \\ \hline
		T &0.272&0.6&0.705&0.470&0.573&0.242 \\ \hline
		
	\end{tabular}
	\caption{All estimated model parameters such as undirected edges (UE), incoming edges (IE), outgoing edges (OE), incoming weight (IW) and outgoing weight (OW) for each ward along with disease counts. All values are normalized with their corresponding maximum}
	\label{ward_count}
\end{table}
In Fig. \ref{count_edge_inc_out_w}, the normalized dengue counts, number of undirected edges of each ward, total incoming weights and total outgoing weights given in \eqref{inc_wei} and \eqref{out_wei} of each ward are shown. 
%From Fig. \ref{count_edge_inc_out} and \ref{count_edge_inc_out_w}, the wards L, E, G/N, F/N, H/E, R/S, R/N, S, G/S, F/S, A are causing more disease spread than other wards and wards M/W and C are causing less disease spread.
\begin{figure}[]
	\centering
	\includegraphics[scale=0.6,center]{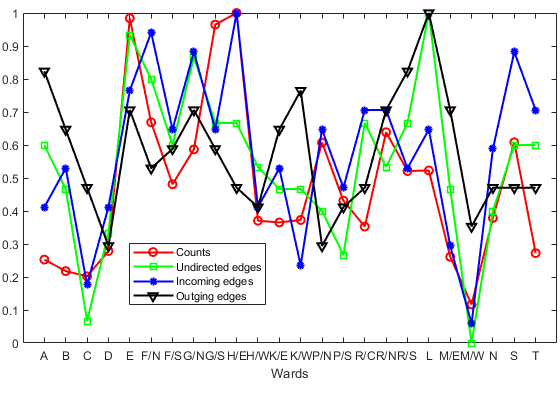}
	\caption{Normalized number of dengue counts, number of undirected edges from Fig. \ref{estimated_par}, number of incoming and outgoing edges from Fig. \ref{estimated_cas} of each ward}
	\label{count_edge_inc_out}
\end{figure}
\begin{figure}[]
	\centering
	\includegraphics[scale=0.6,center]{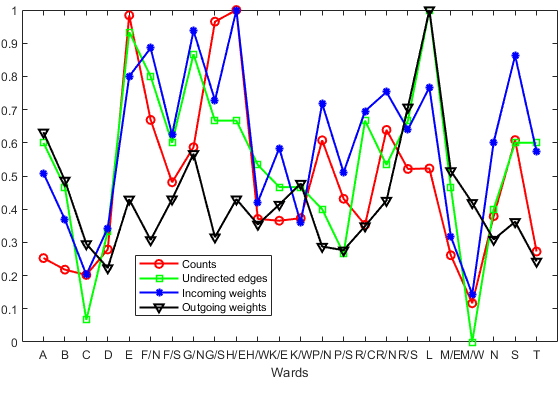}
	\caption{Normalized number of dengue counts, number of edges from Fig. \ref{estimated_par}, total incoming weights and total outgoing weights from Fig. \ref{estimated_cas} of each ward}
	\label{count_edge_inc_out_w}
\end{figure}

Several expected and some unexpected inferences are evident from  Fig. \ref{count_edge_inc_out} and \ref{count_edge_inc_out_w}:
\begin{enumerate}
	\item Dengue counts in some wards (A, E, F/N, G/N, L) are highly correlated with counts in other wards. This is evident from the high number of undirected edges of the estimated partial correlation graph incident upon these wards. While this observation provides some insight into the spread pattern, information about the direction of the spread is not available. 
	\item Some of the wards (A, R/S, L) act as sources of dengue spread. This is evident from the high number of outgoing edges of the estimated causality graph emanating from these wards.
	\item Actual dengue count can be low for these source wards (A, R/S, L). This is quite counter-intuitive and one of the key findings of this analysis. A very high daily commuter flow might explain this phenomenon.
	\item From Fig. \ref{count_edge_inc_out} and \ref{count_edge_inc_out_w}, the number of incoming edges seems to correlate well to the disease count. This seems to verify that the dengue is spread primarily by human movement. 
\end{enumerate}

%%%%%%%%%%%%%%%%%%%%%%%%%%%%%%%%%%%%%%%%%%%%%%%%%

\section{Conclusion}\label{sec:conclusion}
In this paper, we have investigated graphical interaction models of multivariate time series of counts using a parametric approach. The partial correlations and causalities between observed multivariate count data are defined in terms of the partial correlations and causalities between latent processes. Further a joint MLE with $ \ell_{1} $-type regularization is used to estimate the inverse spectral density matrix. To overcome the computational difficulties with the resulting mixture distributions, an MCEM algorithm with $\ell_{1}$-type regularization is proposed. Asymptotic convergence results for the sequence generated by Algorithm \ref{MCEM_algorithm} are presented and the results are verified with simulations.  Finally, the partial correlation and causality graphs are estimated for dengue count data observed weekly from each ward of Greater Mumbai city over six years. Surprisingly some wards seem to act as epicenters of disease spread even though their absolute disease counts are relatively low. Such an inference may help correct public health intervention policies in the future.
%%%%%%%%%%%%%%%%%%%%%%%%%%%%%%%%%%%%%%%%%%%%%%%%%

\acks{This work was partially supported by the Science and Research Engineering Board (Department of Science and Technology, Government of India). The authors thank Municipal Corporation of Greater Mumbai (MCGM) for providing the dengue dataset.}

%%%%%%%%%%%%%%%%%%%%%%%%%%%%%%%%%%%%%%%%%%%%%%%%%

\vskip 0.2in
\bibliography{bibdata}

\end{document}